\documentclass{article} 
\usepackage[preprint]{colm2026_conference}

\usepackage{microtype}
\usepackage{hyperref}
\usepackage{url}
\usepackage{booktabs}

\newcommand{\Kun}[2]{\color{green}[Kun]}

\usepackage[most]{tcolorbox}
\usepackage{fvextra}
\usepackage{subcaption}
\usepackage{graphicx}
\usepackage[ruled]{algorithm2e}
\usepackage{enumitem}
\usepackage{amsmath}
\usepackage{amssymb}
\usepackage{amsthm}
\newtheorem{theorem}{Theorem}
\newtheorem{proposition}{Proposition}

\usepackage{lineno}

\definecolor{darkblue}{rgb}{0, 0, 0.5}
\hypersetup{colorlinks=true, citecolor=darkblue, linkcolor=darkblue, urlcolor=darkblue}

\title{Infeasibility Aware Large Language Models for Combinatorial Optimization}

\author{
  Yakun Wang \thanks{Part of the work was done while the author was at Johnson \& Johnson.} \\
  Lehigh University \\
  Bethlehem, PA 18015, USA \\
  \texttt{yaw220@lehigh.edu} \\
  \And
  Min Chen, Zeguan Wu \& Junyu Liu \\
  Department of Computer Science \\
  University of Pittsburgh \\
  Pittsburgh, PA 15260, USA \\
  \texttt{\{mic380, zew79, junyuliu\}@pitt.edu} \\
  \AND
  Sitao Zhang \& Zhenwen Shao \\
  Johnson \& Johnson \\
  New Brunswick, NJ 08933, USA \\
  \texttt{\{SZhang78, ZShao5\}@its.jnj.com}
}

\begin{document}

\ifcolmsubmission
\linenumbers
\fi

\maketitle

\begin{abstract}
Large language models (LLMs) are increasingly explored for NP-hard combinatorial optimization problems, but most existing methods emphasize feasible-instance solution generation and do not explicitly address infeasibility detection. We propose an infeasibility-aware framework that combines certifiable dataset construction, supervised fine-tuning, and LLM-assisted downstream search. For the minor-embedding problem, we introduce a new mathematical programming formulation together with provable zero-phase infeasibility screening, which enables scalable construction of training instances labeled either as feasible with structured certificates or as certifiably infeasible. Using training data generated through this exact optimization pipeline, we show that an 8B-parameter LLM can be fine-tuned to jointly perform solution generation and infeasibility detection. We further utilize LLM outputs as warm starts for downstream local search, providing a practical way to accelerate optimization even when the LLM outputs are imperfect. Experiments show that our fine-tuned model improves overall accuracy by up to 30\% over GPT-5.2; meanwhile LLM-guided warm starts provide up to $2\times$ speedup compared with starting from scratch in downstream local search. 
\end{abstract}

\section{Introduction}
Large language models (LLMs) have rapidly progressed beyond traditional natural language processing (NLP) tasks and are increasingly viewed as general-purpose reasoning and decision-making agents~\citep{yao2022react,huang2022language}. This evolution has sparked growing interest in their potential to address NP-hard combinatorial optimization (CO) problems~\citep{yang2023large, xiao2023chain,optimus2023}, which arise ubiquitously in real-world applications~\citep{da2025large, bengio2021machine}.

Recent research on LLMs for combinatorial optimization falls into two paradigms: The first leverages LLMs as assistive tools to generate executable code, discover heuristics, or interact with mathematical optimization solvers~\citep{xiao2023chain,jiang2024llmopt,huang2025orlm,ye2024reevo}. While effective, these approaches still require substantial domain expertise; Another paradigm explores end-to-end LLM-based solvers that address CO problems directly in a zero-shot or few-shot setting, without relying on problem-specific templates or external solver calls~\citep{yang2023large,jiang2025large}. However, the applicability of these approaches to more complex and practically relevant CO problems remains largely unexplored. Moreover, existing end-to-end LLM solvers are not reliable enough since they typically lack infeasibility detection and do not incorporate systematic post-processing mechanisms to repair or validate incorrect solutions. A further challenge lies in the construction of training datasets: For some classical CO problems, well-established algorithms can generate reliable ground truth solutions at scale. In contrast, for emerging and structurally complex problems such as minor-embedding, current optimization solvers are limited in producing solutions or infeasibility certificates within reasonable time~\citep{bernal2020integer}, making supervised learning particularly challenging.

Motivated by these gaps, we propose an infeasibility-aware framework for applying LLMs to NP-hard combinatorial optimization problems. The framework consists of three stages: (i) construction of training datasets containing both feasible instances and certifiably infeasible instances, (ii) supervised fine-tuning of a small language model to jointly predict structured solutions or infeasibility, and (iii) LLM-guided downstream search, where model outputs are used to initialize local search procedures. This design allows the LLM to function not only as a predictor, but also as a practical component in a broader optimization pipeline. We focus on the minor-embedding problem as our primary case study because it is a fundamental task in quantum computing~\citep{choi2008minor} where even determining feasibility is NP-hard~\citep{lobe2021minor}. Its intricate constraints often exceed the capabilities of classical solvers like Gurobi~\citep{gurobi} or OR-Tools~\citep{ortools}, making it an ideal testbed for LLM-based initialization. To demonstrate that the resulting workflow is not specific to minor-embedding, we also evaluate it on $k$-coloring as a second NP-hard problem.

Our contributions are summarized as follows:
\begin{itemize}
\item We propose an infeasibility-aware workflow for LLM-based combinatorial optimization that combines three components: the construction of training datasets containing both feasible and certifiably infeasible instances, supervised fine-tuning for structured solution prediction and infeasibility detection, and LLM-assisted initialization of downstream local search.

\item For the minor-embedding problem, we develop a new dataset construction methodology based on mathematical programming formulations and infeasibility certification, enabling scalable generation of labeled instances for a problem where both structured feasible solutions and reliable infeasibility labels are difficult to obtain.

\item We show that a fine-tuned 8B-parameter LLM can jointly perform solution generation and infeasibility detection, and its outputs provide effective warm starts for local search algorithm like \emph{Feasibility Jump}~\citep{luteberget2023feasibility}. Across minor-embedding and $k$-coloring, the resulting approach improves overall accuracy by up to 30\% over a general-purpose LLM and yields up to $2\times$ speedup compared with local search starting from scratch.
\end{itemize}

\section{Related work}

This section reviews prior work most relevant to our setting: LLMs for combinatorial optimization and reasoning under hard constraints, and existing methods for the minor-embedding problem.

\paragraph{LLMs for combinatorial optimization.}
Recent work has explored several roles for LLMs in combinatorial optimization. One line of research uses LLMs to generate reusable solver components, such as heuristics, search operators, or executable code, as in FunSearch~\citep{funsearch2024}, ReEvo~\citep{ye2024reevo}, and subsequent reinforcement-learning-based extensions~\citep{surina2025algorithm}. A second line studies LLMs as auto-formulation engines that translate natural-language descriptions into mathematical models, solver code, or optimization pipelines, including Chain-of-Experts~\citep{xiao2023chain}, OptiMUS~\citep{optimus2023,optimus03_2024}, tree-search-based formulation methods~\citep{jiang2024llmopt}, and ORLM~\citep{huang2025orlm}. A third line evaluates LLMs as solvers that directly produce structured answers for problem instances, either by zero-shot generation or through iterative refinement~\citep{jiang2025large,yang2023large,liu2024evolutionary}. Related hybrid methods further combine language models with symbolic or logic-based reasoning modules to improve correctness on constrained tasks~\citep{ye2023satlm,zhang2024dila}. 
Despite these advances, prior work primarily emphasizes candidate generation, code synthesis, or solver-aided reasoning over infeasibility detection~\citep{da2025large}. While some recent work uses LLMs to diagnose faulty optimization models~\citep{chen2024diagnosing}, directly predicting infeasibility for combinatorial instances remains largely unexplored.
Recent boolean satisfiability problem (SAT) oriented studies also suggest that feasibility reasoning remains challenging for current language models, especially on hard or UNSAT-style instances~\citep{pan2024can,hazra2025have,wei2025satbench}. Unlike solver-aided reasoning methods that depend on inference-time symbolic backends, our framework uses exact optimization strictly offline for data construction. This difference is especially relevant for structured combinatorial optimization, where infeasibility often depends on problem-specific certificates rather than generic logical encodings that can be unnatural or unscalable in practice.

\paragraph{Minor-embedding.}
The minor-embedding problem is a fundamental preprocessing step in quantum annealing (QA)~\citep{kadowaki1998quantum}, requiring the logical problem graph to be mapped into the sparse hardware graph of a quantum processing unit by representing each logical variable as a connected chain of physical qubits~\citep{choi2008minor,choi2011minor}. This problem is NP-hard~\citep{lobe2021minor}, and embedding quality directly impacts solution quality, as longer chains increase susceptibility to chain breaks during annealing~\citep{fang2020minimizing}. The dominant practical tool is the \texttt{minorminer} heuristic~\citep{cai2014practical}, complemented by polynomial-time clique embedding for fully connected graphs~\citep{boothby2016fast} and various alternatives including layout-aware~\citep{zbinden2020embedding}, simulated-annealing-based~\citep{sugie2021minor}, and reinforcement-learning-based approaches~\citep{nembrini2025minor}. 
While effective, these heuristics share a critical limitation: they provide neither optimality guarantees nor certificates of infeasibility. Consequently, a failed embedding attempt cannot distinguish between instances that are truly infeasible and those that merely fall outside the scope of the heuristic algorithm. This ambiguity creates a costly bottleneck in QA workflows, where quantum processing unit (QPU) access is limited and expensive. While exact methods based on algebraic geometry~\citep{dridi2018algebraic} and integer programming~\citep{bernal2020integer} can certify infeasibility, they remain tractable only for small instances. Our work bridges this gap: we use exact solvers to generate provably labeled training data for both feasible embeddings and infeasibility certificates, then train an LLM to generalize these patterns, enabling fast and reliable infeasibility screening at scale.

\section{Methodology}
This section describes the proposed framework and its core components. Figure~\ref{fig:framework} illustrates the architectural overview of the system. We first present the dataset construction methodology in Section~\ref{sec:dataset}. We then detail the supervised fine tuning process in Section~\ref{sec:SFT}. Finally, we discuss inference and post processing in Section~\ref{sec:FJ}.
\begin{figure}[htpb]
\centering
\includegraphics[width=0.8\linewidth]{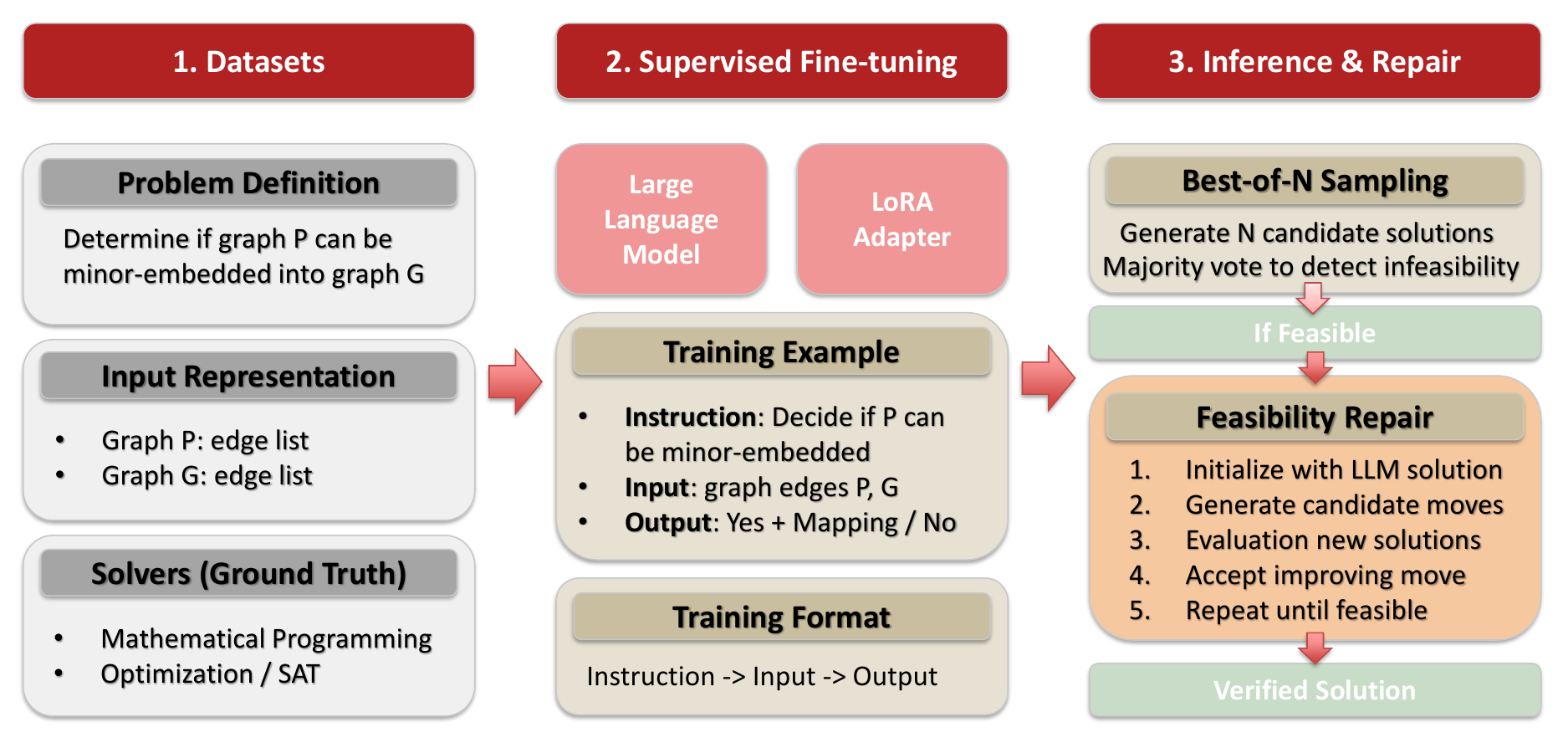}
\caption{Framework.}
\label{fig:framework}
\end{figure}
\subsection{Mathematical programming for dataset generation}
\label{sec:dataset}
Training the LLM requires high-quality ground truth labels for each problem instance. Mathematical programming is a standard approach for combinatorial optimization because it provides optimality guarantees or certificates of infeasibility when processed by solvers such as Gurobi or OR-Tools. To facilitate this, we provide a formal problem definition and the mathematical programming formulation for the tasks considered in this study.

\subsubsection{Mathematical programming formulation for minor-embedding}
\label{sec:minor-embedding-model}
In this section, we present a new mathematical programming formulation for the minor-embedding problem, which we use to construct ground-truth solutions for dataset instances. In addition, we propose a property that enables fast infeasibility detection for a subset of instances.

We consider the minor-embedding of a problem graph $P=(V_P,E_P)$ into a hardware graph $G=(V_G,E_G)$. A minor-embedding assigns to each problem vertex $i\in V_P$ a connected set of hardware vertices (called a \emph{chain}) such that chains assigned to distinct problem vertices are pairwise disjoint, and every problem edge is realized by at least one hardware edge between the corresponding chains.

\paragraph{Admissible chains.}
As noted in~\citet{fang2020minimizing, bernal2020integer}, longer chains are more susceptible to chain breaks; therefore, limiting chain size is good practice. We exploit this observation to derive a compact integer programming model. Specifically, we assume that a family $\mathcal{C}\subseteq 2^{V_G}$ of admissible connected chains in $G$ has been enumerated in advance, optionally with a size bound $|C|\le L$. For each chain $C\in\mathcal{C}$, let $s(C)=|C|$, and define $\Gamma(C):=\{D\in\mathcal{C}: \exists\, \{u,v\}\in E_G \text{ with } u\in C,\ v\in D\},$ the set of chains adjacent to $C$ in the hardware graph. Let $A_P:=\{(i,k),(k,i): \{i,k\}\in E_P\}$ denote the directed arc set induced by the edges of the problem graph.

\paragraph{Decision variables.}
For each problem vertex $i\in V_P$ and chain $C\in\mathcal{C}$, let $x_{iC}\in\{0,1\}$ indicate whether vertex $i$ is assigned to chain $C$.

\paragraph{Formulation.}
The minor-embedding problem is to minimize \eqref{eq:objective-main} subject to \eqref{eq:feas-assign-main}--\eqref{eq:feas-adj-main}:
\begin{align}
    \min \textstyle & \sum_{i,C} s(C)x_{iC} \label{eq:objective-main} \hspace{-2em} \\
    \text{s.t. } &\textstyle \sum_{C} x_{iC} = 1  && \forall i \in V_P \label{eq:feas-assign-main} \\
    &\textstyle \sum_{i, C:j\in C} x_{iC} \le 1 && \forall j \in V_G \label{eq:feas-disjoint-main} \\
    &x_{iC} \le \textstyle \sum_{D \in \Gamma(C)} x_{kD} && \forall (i,k) \in A_P, C \in \mathcal{C} \label{eq:feas-adj-main}
\end{align}

Constraints \eqref{eq:feas-assign-main} ensure unique chain assignment, \eqref{eq:feas-disjoint-main} enforce disjointness, and \eqref{eq:feas-adj-main} maintain adjacency between problem and hardware graphs.

For a prescribed chain family $\mathcal{C}$, the formulation uses $O(|V_P||\mathcal{C}|)$ binary variables and $O(|V_P|+|V_G|+|E_P||\mathcal{C}|)$ constraints. When $\mathcal{C}$ consists of all connected chains of size at most $L$, we have $|\mathcal{C}| = O(|V_G|^L)$ in the worst case, implying a polynomial-size formulation for fixed $L$. In sparse hardware graphs, the number of connected chains is often much smaller in practice. Nevertheless, the existence of a polynomial-size formulation for fixed $L$ does not make the problem easy: even with the admissible chain family given explicitly, the feasibility version of the problem remains computationally hard. We formalize this below.

\begin{theorem}
\label{thm:meec-feas-npc}
Consider a problem graph $P=(V_P,E_P)$, a hardware graph $G=(V_G,E_G)$, and an explicitly listed family $\mathcal{C}$ of connected subsets of $V_G$, with $\{v\} \in \mathcal{C}$ for all $v \in V_G$. Deciding whether there exists a assignment satisfying constraints \eqref{eq:feas-assign-main}--\eqref{eq:feas-adj-main} (denoted as \emph{MEEC-Feas}) is NP-complete.
\end{theorem}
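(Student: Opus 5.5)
Membership in NP is immediate. A certificate is the assignment $x$ itself, or equivalently a map $i\mapsto C_i\in\mathcal{C}$. Its size is polynomial in the input, because $\mathcal{C}$ is explicitly listed. Checking it is also polynomial: \eqref{eq:feas-assign-main} says each $i$ receives exactly one chain; \eqref{eq:feas-disjoint-main} says the chosen chains are pairwise disjoint; and \eqref{eq:feas-adj-main} says that for every arc $(i,k)\in A_P$ the chain $C_k$ lies in $\Gamma(C_i)$. Each check takes time polynomial in $|V_P|$, $|E_P|$, $|\mathcal{C}|$ and $|E_G|$.

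For hardness I would reduce from a classical NP-complete problem. The natural candidate is \emph{subgraph isomorphism}, in its special case \textsc{Clique}. Take the chain family to be exactly the singletons, $\mathcal{C}=\{\{v\}:v\in V_G\}$; this satisfies the hypothesis that all singletons are included. Under this choice the constraints say the following.
\begin{itemize}
\item Each $i\in V_P$ is mapped to a single vertex $\phi(i)\in V_G$.
\item By \eqref{eq:feas-disjoint-main}, $\phi$ is injective.
\item Since $\Gamma(\{u\})=\{\{v\}:\{u,v\}\in E_G\}$, constraint \eqref{eq:feas-adj-main} forces $\{\phi(i),\phi(k)\}\in E_G$ for every $\{i,k\}\in E_P$.
\end{itemize}
One subtlety: if $G$ has no self-loops, then $\{u\}\notin\Gamma(\{u\})$, so the adjacency condition cannot be met by mapping both endpoints to the same vertex; in any case injectivity already excludes that. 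Hence feasible assignments are exactly injective homomorphisms of $P$ into $G$, i.e.\ embeddings of $P$ as a (not necessarily induced) subgraph of $G$.

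Given a \textsc{Clique} instance $(H,k)$, set $P=K_k$, $G=H$ and $\mathcal{C}$ equal to the singletons. This is a polynomial-time construction, and MEEC-Feas is feasible if and only if $H$ contains a $k$-clique. This proves NP-hardness, and together with membership, NP-completeness.

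The only delicate point is making sure the definition of $\Gamma$ behaves correctly on singletons. Since $\Gamma(C)$ is defined via hardware edges $\{u,v\}$ with $u\in C$ and $v\in D$, a loopless $G$ gives exactly the neighbouring singletons, as used above. I would also remark that the same reduction shows hardness for every fixed $L\ge 1$: taking $\mathcal{C}$ to be all connected sets of size at most $1$ is a special case of the bounded-chain setting. This is why the polynomial size of the formulation does not help.
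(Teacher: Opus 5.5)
Your proof is correct, and your membership argument matches the paper's. For hardness, though, you take a genuinely different and much shorter route.

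The paper reduces from 3-Dimensional Matching. Its hardware graph has three $K_4$ gadgets, one vertex per element and one vertex $p_t$ per triple. Besides the singletons, it lists the size-4 star chains $C_t=\{p_t,x,y,z\}$. The problem graph is three $K_4$'s plus $n$ selector vertices. Two case-analysis claims then show that the problem $K_4$'s must occupy the hardware gadgets, with each $u_B^0$ on an $a$-vertex. Hence every selector must take some chain $C_t$, and pairwise disjointness of the selected chains is exactly a matching of size $n$.

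You instead notice that the hypothesis only requires the singletons to be present, so $\mathcal{C}$ may consist of the singletons alone. The constraints then collapse to an injective map sending edges to edges, i.e.\ non-induced subgraph isomorphism, and $P=K_k$ gives \textsc{Clique}.

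What each route buys:
\begin{itemize}
\item Your route avoids all gadget analysis and shows hardness already at chain length $L=1$.
\item The paper's route produces instances in which the vertex placement is rigid and the whole difficulty lies in packing pairwise disjoint nonsingleton chains. This set-packing flavour is closer to the paper's motivating claim that chain selection itself is hard, but the theorem as stated does not require it.
\end{itemize}

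Two small points. First, for the construction to be polynomial you should say you may assume $k\le|V(H)|$, outputting a trivial no-instance otherwise, since $K_k$ has $\Theta(k^2)$ edges. Second, your closing remark about every fixed $L$ is valid only for explicitly listed families whose chains have size at most $L$. It does not cover the family of \emph{all} connected sets of size at most $L$ for $L\ge 2$, and neither your reduction nor the paper's addresses that case.
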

The proof is given in Appendix~\ref{appendix:proofs}.

\paragraph{Zero-phase infeasibility detection.}
Although the mathematical programming formulation can certify infeasibility and recover an embedding when one exists, solving it can be expensive when generating large benchmark datasets. We therefore apply a \emph{zero-phase infeasibility detection} step before chain enumeration and optimization. Instances rejected at this stage are provably infeasible, while instances that pass are simply not ruled out and are forwarded to the solver. In the following, let $\Delta_H:=\max_{v\in V_H}\deg_H(v)$, and assume that every chain has size at most $L$.
\begin{proposition}[Zero-phase infeasibility certificate]
\label{prop:zero-phase-main}
Suppose a problem graph $P=(V_P,E_P)$ admits a minor-embedding into $H$ with maximum chain size $L$. Then the following necessary conditions hold:
\begin{equation}
\deg_P(i) \le L(\Delta_H-2)+2 \quad \forall i\in V_P, \label{eq:zp-degree-bound-main}
\end{equation}

\noindent\begin{minipage}{.5\textwidth}
\begin{equation}
  \sum_{i\in V_P} s_i^{\min} \le |V_H|, \label{eq:zp-node-bound-main}
\end{equation}
\end{minipage}%
\begin{minipage}{.5\textwidth}
\begin{equation}
  |E_P|+\sum_{i\in V_P}(s_i^{\min}-1) \le |E_H|, \label{eq:zp-edge-bound-main}
\end{equation}
\end{minipage}

where $s_i^{\min}:=
\max\!\left(1,\left\lceil\frac{\deg_P(i)-2}{\Delta_H-2}\right\rceil\right)
\quad (\Delta_H\ge 3)$ is a lower bound on the chain size required for problem vertex $i$.
Consequently, violation of any one of \eqref{eq:zp-degree-bound-main}--\eqref{eq:zp-edge-bound-main} certifies infeasibility.
\end{proposition}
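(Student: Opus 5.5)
Fix a minor-embedding $\phi$ of $P$ into $H$ with chains $\phi(i)\subseteq V_H$, pairwise disjoint, each inducing a connected subgraph, with $|\phi(i)|\le L$. The plan is to prove everything from one per-chain counting bound and then sum it up.

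\textbf{Per-chain bound.} For a chain $C=\phi(i)$ of size $s$, consider the edges of $H$ leaving $C$. Since $C$ is connected, the induced subgraph $H[C]$ contains a spanning tree with $s-1$ edges, and each tree edge uses two endpoint slots. Hence the number of edge endpoints in $C$ available for edges leaving $C$ is at most
\[
s\Delta_H-2(s-1)=s(\Delta_H-2)+2 .
\]
Every neighbour $k$ of $i$ in $P$ needs at least one $H$-edge between $\phi(i)$ and $\phi(k)$. The chains are disjoint, so distinct neighbours need distinct outgoing edges, and therefore
\[
\deg_P(i)\le s(\Delta_H-2)+2 .
\]
The degree bound \eqref{eq:zp-degree-bound-main} follows by putting $s\le L$. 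Solving the same inequality for $s$ (using $\Delta_H\ge 3$, so the divisor is positive) gives $s\ge(\deg_P(i)-2)/(\Delta_H-2)$. Integrality and $s\ge1$ then give $|\phi(i)|\ge s_i^{\min}$.

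\textbf{Node bound.} Since the chains are disjoint subsets of $V_H$,
\[
\sum_i s_i^{\min}\le\sum_i|\phi(i)|\le|V_H| ,
\]
which is \eqref{eq:zp-node-bound-main}.

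\textbf{Edge bound.} I would exhibit an injection of the required edges into $E_H$:
\begin{itemize}[label={}]
\item[(a)] for each $i$, the $|\phi(i)|-1\ge s_i^{\min}-1$ spanning-tree edges inside $\phi(i)$;
\item[(b)] for each problem edge $\{i,k\}\in E_P$, one chosen $H$-edge joining $\phi(i)$ and $\phi(k)$.
\end{itemize}
The edges in (a) have both endpoints in a single chain. The edges in (b) have endpoints in two different chains. Distinct problem edges correspond to distinct unordered chain pairs, and the chains are disjoint, so all of these edges are distinct. Counting them gives $|E_P|+\sum_i(s_i^{\min}-1)\le|E_H|$, which is \eqref{eq:zp-edge-bound-main}. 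The ``consequently'' clause is the contrapositive.

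\textbf{Main obstacle.} The only delicate step is the per-chain bound. I have to be careful that the spanning-tree edges really consume two degree slots each within $C$, and that they cannot also serve as outgoing edges. Both hold because a tree edge has both endpoints in $C$. The rest is direct counting.
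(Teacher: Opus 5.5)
Your proof is correct and follows the paper's argument essentially step for step. You use the same bound $s(\Delta_H-2)+2$ on edges leaving a connected chain (total degree $s\Delta_H$ minus two slots for each of the at least $s-1$ internal edges), the same disjointness sum for \eqref{eq:zp-node-bound-main}, and the same disjoint internal-versus-witness edge count for \eqref{eq:zp-edge-bound-main}. Your explicit observation that witness edges for distinct problem edges are themselves distinct (disjoint chains make each inter-chain hardware edge determine its pair) states a point the paper leaves implicit. Like the paper, your substitution $s\le L$ silently uses $\Delta_H\ge 2$, which the standing assumption $\Delta_H\ge 3$ covers.
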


Proposition~\ref{prop:zero-phase-main} follows from the fact that each problem vertex is mapped to a connected chain in the hardware graph and that a chain of size $s$ in a graph of maximum degree $\Delta_H$ can support at most $s(\Delta_H-2)+2$ external adjacencies \citep{choi2008minor,fang2020minimizing}. A complete proof is provided in Appendix~\ref{appendix:proofs}.

We use Proposition~\ref{prop:zero-phase-main} as an initial infeasibility screening step. We leave more details of implementation in Appendix~\ref{appendix:minor-embedding1}. We note that a mathematical programming formulation for minor-embedding was previously proposed in~\citet{bernal2020integer}. We present the details of that formulation and compare it with ours in Appendix~\ref{appendix:minor-embedding2}.

\subsubsection{Mathematical programming formulation for k-coloring}
We also employ the graph coloring problem as a benchmark for our framework. This problem is particularly instructive because it encompasses two distinct variants: a decision-based version and an optimization-based version. While both versions are computationally difficult (NP-complete/NP-hard) \citep{karp2009reducibility}, our numerical experiments demonstrate that instances involving infeasibility are significantly more challenging for LLMs to resolve directly. We leave details of $k$-coloring problem to Appendix~\ref{appendix:k-coloring}.

\subsection{Supervised fine-tuning}
\label{sec:SFT}
We perform parameter-efficient supervised fine-tuning using Low-Rank Adaptation (LoRA)~\citep{hu2022lora} on models from the Llama family~\citep{llama31herd}. LoRA adapts the pretrained weight matrix \(W\) as \(W' = W + BA\), where \(A \in \mathbb{R}^{r \times k}\) and \(B \in \mathbb{R}^{d \times r}\) are trainable low-rank matrices, \(W\) remains frozen, and \(r \ll \min(d,k)\) denotes the LoRA rank. This substantially reduces the number of trainable parameters and memory cost.  We adopt the standard SFT pipeline: Let \(x\) denote the input instance and \(y = (y_1,\dots,y_T)\) denote the target output sequence. We train the model using the standard autoregressive next-token prediction objective $\mathcal{L}_{\mathrm{SFT}}
= - \sum_{t=1}^{T} \log p_{\theta}\!\left(y_t \mid x, y_{<t}\right),$
where \(\theta\) denotes the trainable LoRA parameters, while the pretrained backbone parameters remain fixed. Notably, our key design choice about the target format is that we adopt a unified and compact output representation in which the model directly emits either a machine-verifiable solution certificate for feasible instances or an explicit infeasibility declaration for infeasible instances (see Appendix~\ref{appendix:prompt} for details). No intermediate reasoning or chain-of-thought is required. This design encourages concise outputs, enabling faster inference and reducing token usage. The unified format allows the model to jointly learn feasibility recognition and solution generation within a single autoregressive process. To further stabilize learning, we enforce strict consistency in formatting, ordering, and solution representation across the dataset.

\subsection{Inference and feasibility repairing}
\label{sec:FJ}
Since autoregressive generation does not guarantee structural correctness, the model may still output invalid solutions even when feasibility is predicted correctly. To address this issue, we propose two mechanisms: Best-of-$N$ inference and feasibility jump.
\paragraph{Best-of-$N$ inference.}
We adapt Best-of-$N$ strategy~\citep{wang2022self} to select the highest quality solution certificates. 
At inference time, we sample the fine-tuned model $N$ times to obtain candidates $\hat y^{(1)}, \dots, \hat y^{(N)}$. A verification module identifies valid certificates by setting $v_n = 1$ for feasible solutions and $v_n = 0$ otherwise. If at least one valid solution exists, we return the candidate $n^\star$
such that
$
n^\star \in \arg\min_{n: v_n=1} f(\hat y^{(n)};x)
$
.
If no valid certificate is found ($\sum v_n = 0$), we predict the feasibility status $\hat c(x)$ via a majority vote over the feasibility claims $c_n \in \{\texttt{yes}, \texttt{no}\}$ extracted from each sample.
:
$$
\hat c(x)=
\begin{cases}
\texttt{no}, & \text{if } \sum_{n=1}^N \mathbf{1}\{c_n=\texttt{no}\} > N/2,\\
\texttt{yes}, & \text{otherwise}.
\end{cases}
$$
This asymmetric policy prioritizes verified certificates as decisive evidence and utilizes majority voting only as a fallback for cases where verification fails.

\paragraph{Feasibility Jump.}

Feasibility Jump (FJ) \citep{luteberget2023feasibility} is a problem-agnostic primal heuristic for Mixed Integer Linear Programs (MILPs) (see Appendix~\ref{appendix:milp_background}). While FJ searches directly for solutions from arbitrary assignments, it lacks a mechanism to warm start from high-quality solutions to accelerate convergence. Our framework bridges this gap by using LLMs to provide initial configurations.

Starting from an initial assignment, FJ iteratively performs ``jumps" (optimal single-variable changes) to minimize the weighted violation objective. If FJ hits a local minimum, it increases the weights of violated constraints to escape. We parse LLM outputs and check them for feasibility: feasible solutions are accepted directly, while infeasible ones serve as the starting point for FJ. This synergy is particularly effective because LLM outputs are often structurally informative yet imperfect, precisely the type of input FJ is designed to repair. Appendix~\ref{appendix:fj} provides a detailed description of the repair procedure and update rules.

\section{Experiments}
We fine-tune the Llama 3.1 8B model~\citep{llama31herd} as our core generative solver. For each problem type, we construct 10,000 labeled instances. We use balanced datasets with around 50\% satisfiable (SAT) and 50\% unsatisfiable (UNSAT) instances. Training is performed with Unsloth~\citep{unsloth}, and inference uses top-$p$ sampling with $p=0.7$. We compare against GPT-5.2, GPT-5.2-Reasoning, DeepSeek-V3.2, Llama-3.3-70B, and Grok-4.1-Fast. Detailed training hyperparameters and dataset construction procedures are provided in the Appendix~\ref{appendix:experiment setting}. We also compare token usage and show that our fine-tuned model is more efficient than the benchmark models; see Appendix~\ref{appendix: token_usage}.

\subsection{Main results}
\paragraph{Minor-embedding.}

\begin{table}[htbp]
\small
\begin{center}
\begin{tabular}{ccccc}
\toprule
 Model             & Inf. Det.               & SAT Acc.               & UNSAT Acc.    & Time (s)           \\
\midrule
Deepseek-V3.2      & 55.1\%               &   74.2\%             &  36.0\%             &$17.37\pm71.72$ \\
Llama3.3-70B       & 73.0\%            &    79.0\%          &   67.0\%       &  $3.72\pm5.09$  \\
Grok-4.1-Fast (Reasoning) & 71.2\%            &    42.4\%          &   \textbf{100\%}       &  $114.40\pm107.72$  \\
GPT5.2             & 78.5\%                  & 61.2\%                    & 95.8\%        &      $8.67\pm5.55$                     \\
GPT5.2-Reasoning   & 77.5\%                  & 55.0\%                   & \textbf{100\%}           &  $49.68\pm54.36$     \\
Llama-8b SFT BestOf1 & 98.5\%                  & 98.0\%                   & 99.0\%               &  $2.30\pm2.66 $          \\
Llama-8b SFT BestOf8 & \textbf{99.9\%}                  & \textbf{100\%}                   & 99.8\%             &   $ 3.86\pm3.68 $  \\
\bottomrule
\end{tabular}
\end{center}
\caption{Evaluation of infeasibility detection accuracy (Inf. Det.), breakdown accuracy for satisfiable (SAT) and unsatisfiable (UNSAT) instances on the minor-embedding problem. Bold values indicate the best performance for each metric.}
\label{tab:minor-embedding-main}
\end{table}

\begin{figure}[htbp]
    \centering

    \begin{subfigure}[b]{0.325\linewidth}
        \centering
        \includegraphics[width=\linewidth]{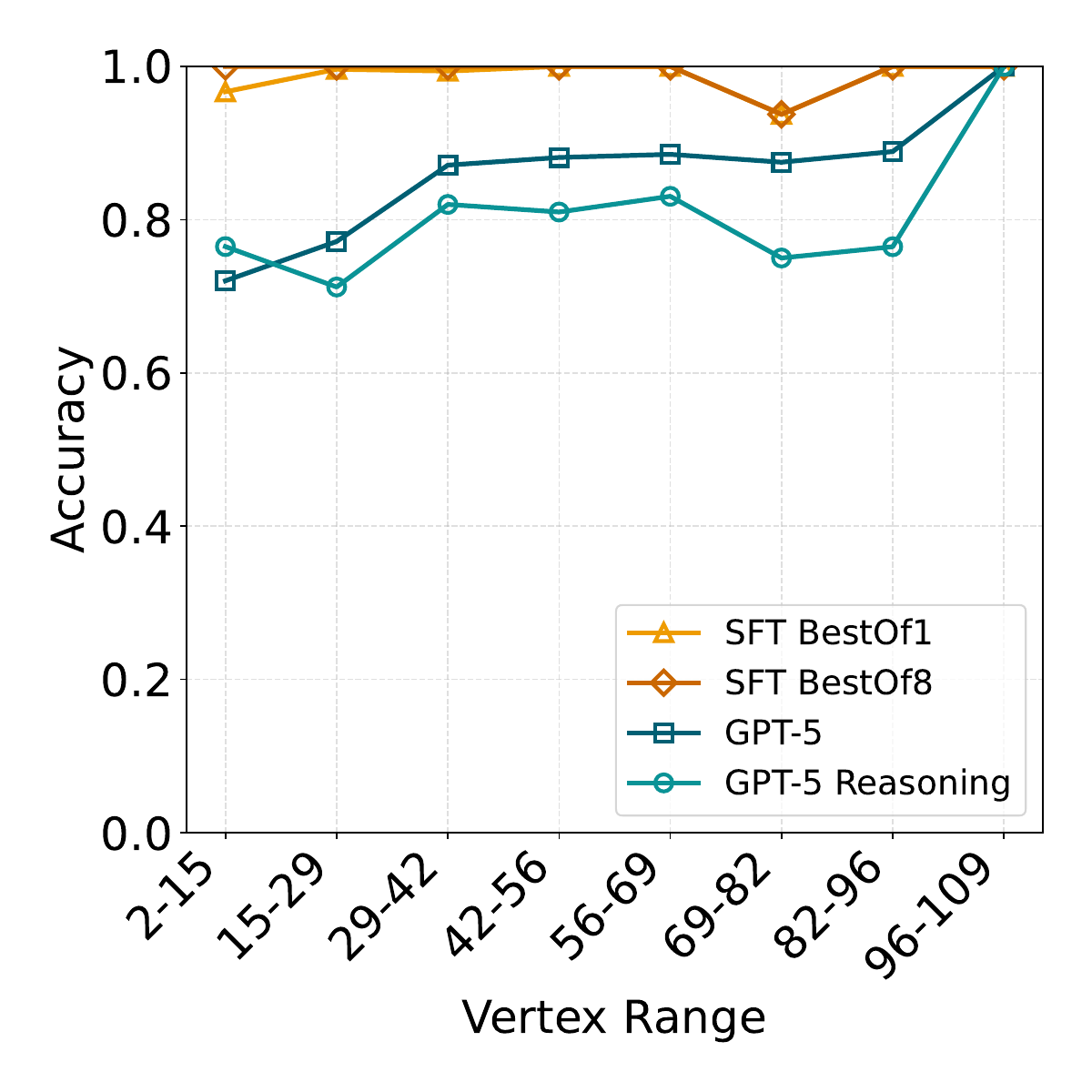}
        \caption{Overall}
        \label{fig:embedding-overall}
    \end{subfigure}
    \hfill
    \begin{subfigure}[b]{0.325\linewidth}
        \centering
        \includegraphics[width=\linewidth]{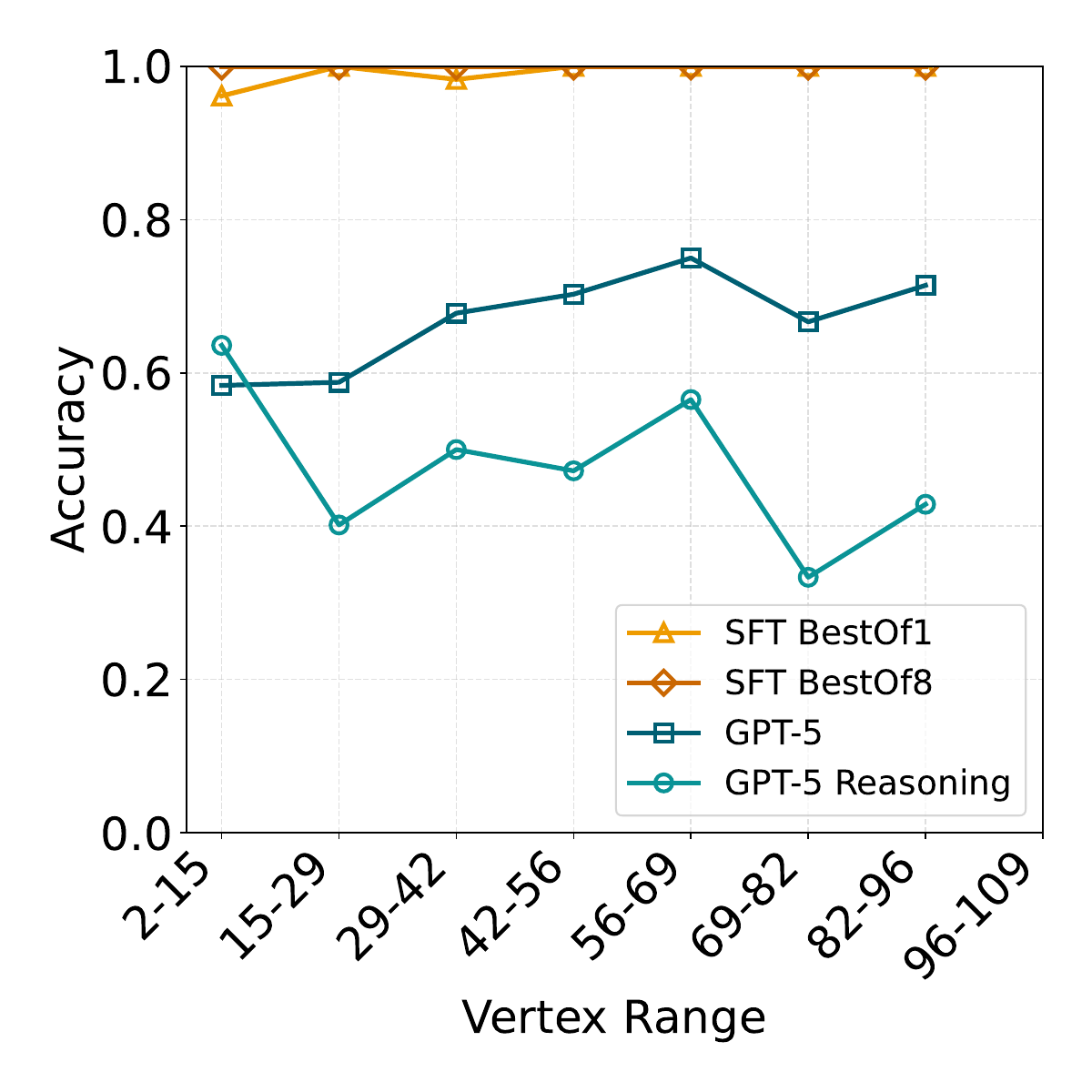}
        \caption{SAT}
        \label{fig:embedding-yes}
    \end{subfigure}
    \hfill
    \begin{subfigure}[b]{0.325\linewidth}
        \centering
        \includegraphics[width=\linewidth]{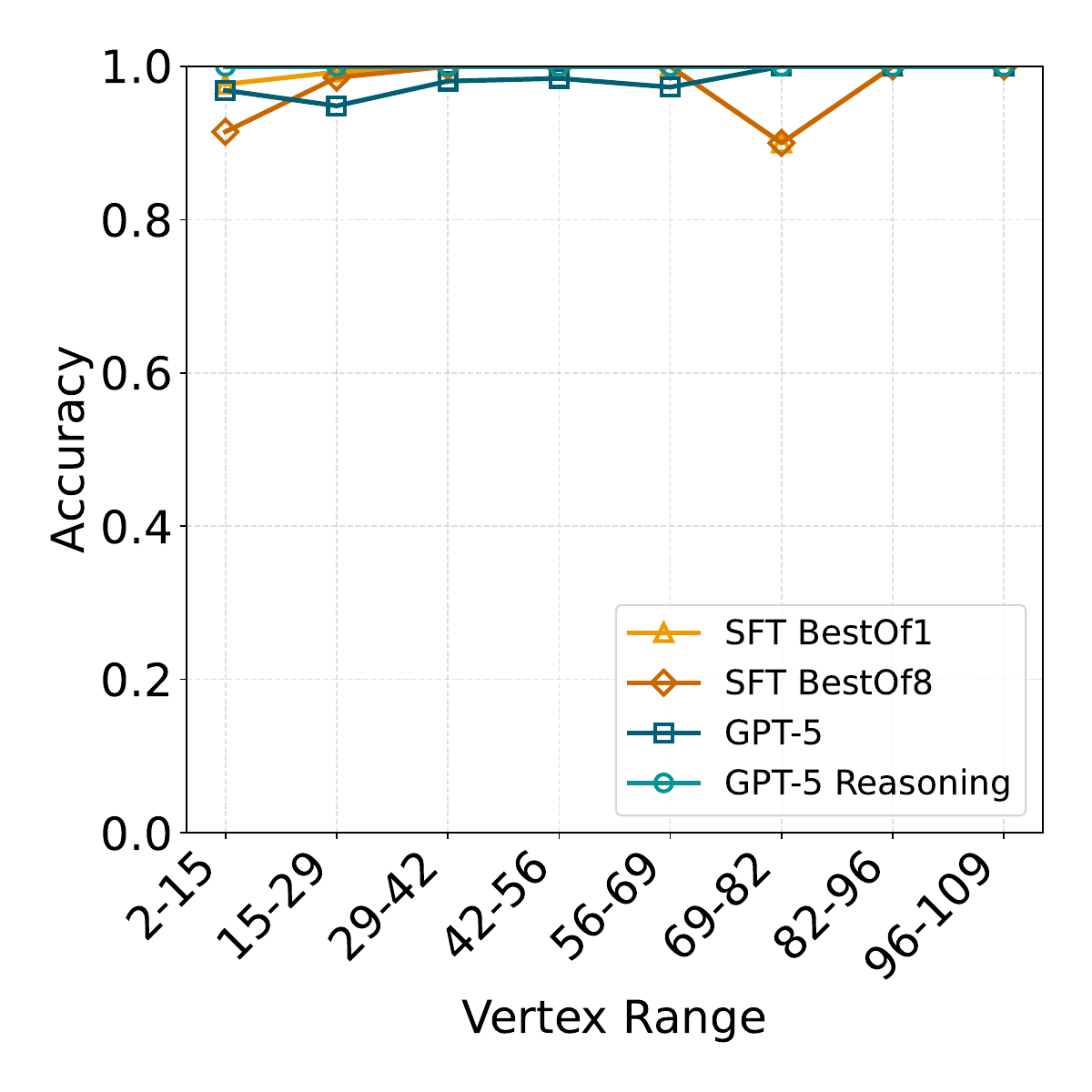}
        \caption{UNSAT}
        \label{fig:embedding-no}
    \end{subfigure}
    
    \caption{Accuracy breakdown by the problem graph vertices number. Subplots show results for (a) the full dataset, (b) SAT instances only, and (c) UNSAT instances only. }
    \label{fig:embedding-accuracy}
\end{figure}
Table~\ref{tab:minor-embedding-main} and Figure~\ref{fig:embedding-accuracy} summarize the minor-embedding results. Our fine-tuned model clearly outperforms all baselines while remaining balanced across SAT and UNSAT instances. \textsc{Llama-8B SFT} already achieves 98.5\% overall accuracy with Best-of-1, including 98.0\% SAT accuracy and 99.0\% UNSAT accuracy. Best-of-8 further improves performance to 99.9\% overall accuracy, with 100\% SAT accuracy and 99.8\% UNSAT accuracy.

In contrast, the frontier models are much more asymmetric. GPT-5.2 and GPT-5.2-Reasoning perform well on UNSAT instances (95.8\% and 100\%), but are much weaker on SAT instances (61.2\% and 55.0\%), indicating a bias toward predicting infeasibility rather than constructing valid embeddings. Figure~\ref{fig:embedding-accuracy} shows that this advantage remains consistent across instance sizes.

\paragraph{Graph coloring.}
Table~\ref{table:3-coloring} and Figure~\ref{fig:3-coloring-accuracy} summarize the performance on the 3-coloring problem. 

\begin{table}[htbp]
\small
\begin{center}
\begin{tabular}{ccccc}

\toprule
 Model             & Inf. Det.               & SAT Acc.               & UNSAT Acc.        & Time (s)       \\
\midrule
Deepseek-V3.2      & 48.1\%                  & 26.9\%                    & 71.7\%               & $11.75\pm56.95$\\
Llama3.3-70B       & 47.7\%                  & 1.3\%                     & 99.2\%               & $1.08\pm0.92$ \\
Grok-4.1-Fast (Reasoning)  & 60.0\%                & 23.5\%                    & 99.2\%               &   $111.42\pm61.20$ \\
GPT5.2             & 50.4\%                  & 5.89\%                    & 99.79\%              &   $7.92\pm10.14$                 \\
GPT5.2-Reasoning   & 62.3\%                  & 28.33\%                   & \textbf{100}\%        &  $42.98\pm55.19$         \\
Llama3.1-8b SFT BestOf1 & 83.1\%                  & 83.84\%                   & 82.28\%                &   $11.22\pm11.58$                \\
Llama3.1-8b SFT BestOf8 & \textbf{89.2\%}                  & \textbf{95.82\%}                   & 81.86\%           &     $16.55\pm12.58$     \\
\bottomrule
\end{tabular}
\end{center}
\caption{Evaluation of infeasibility detection (Inf. Det.), breakdown accuracy for satisfiable (SAT) and unsatisfiable (UNSAT) instances on the 3-coloring problem. Bold values indicate the best performance for each metric.}
\label{table:3-coloring}
\end{table}

\begin{figure}[htbp]
    \centering

    \begin{subfigure}[b]{0.325\linewidth}
        \centering
        \includegraphics[width=\linewidth]{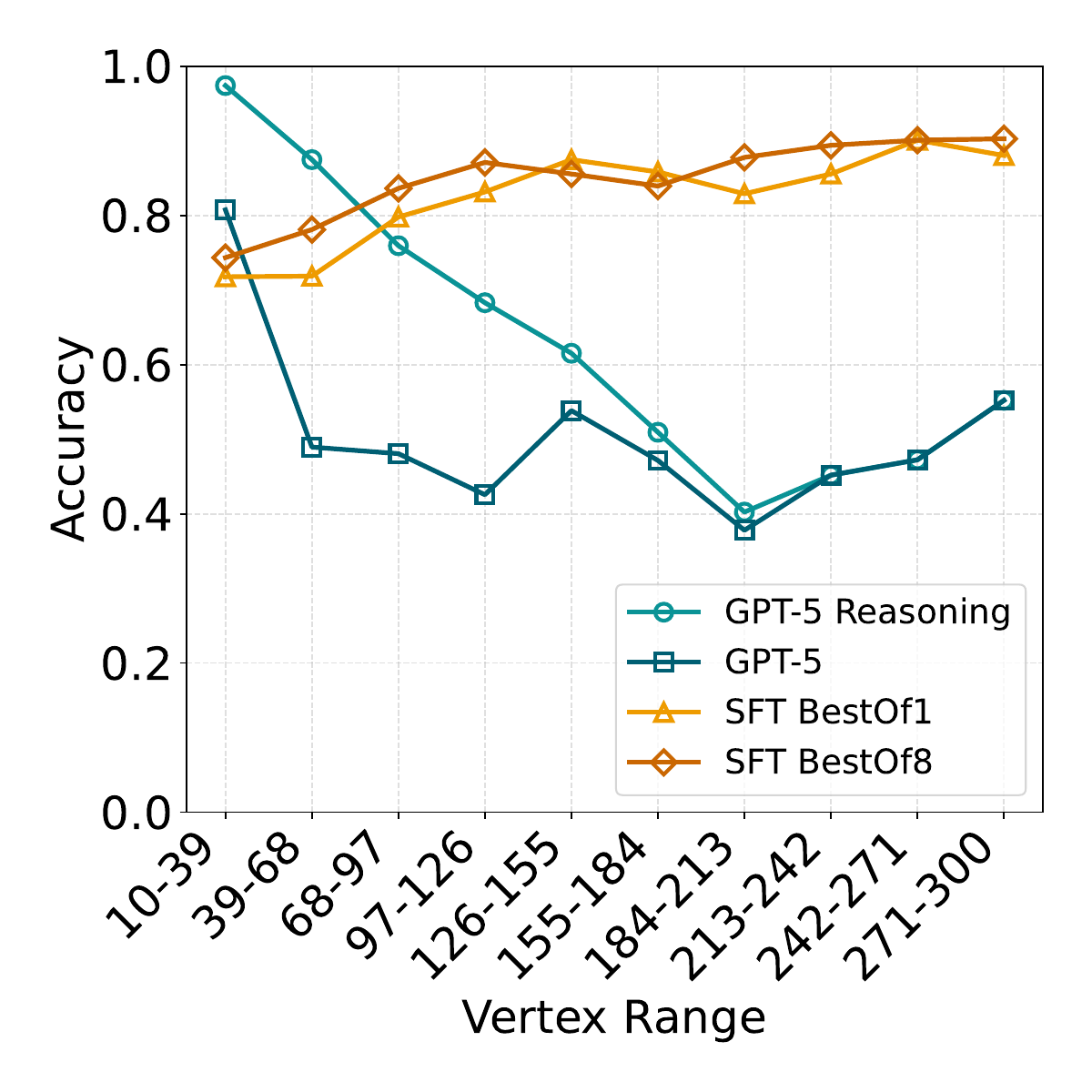}
        \caption{Overall}
        \label{fig:3-coloring-overall}
    \end{subfigure}
    \hfill
    \begin{subfigure}[b]{0.325\linewidth}
        \centering
        \includegraphics[width=\linewidth]{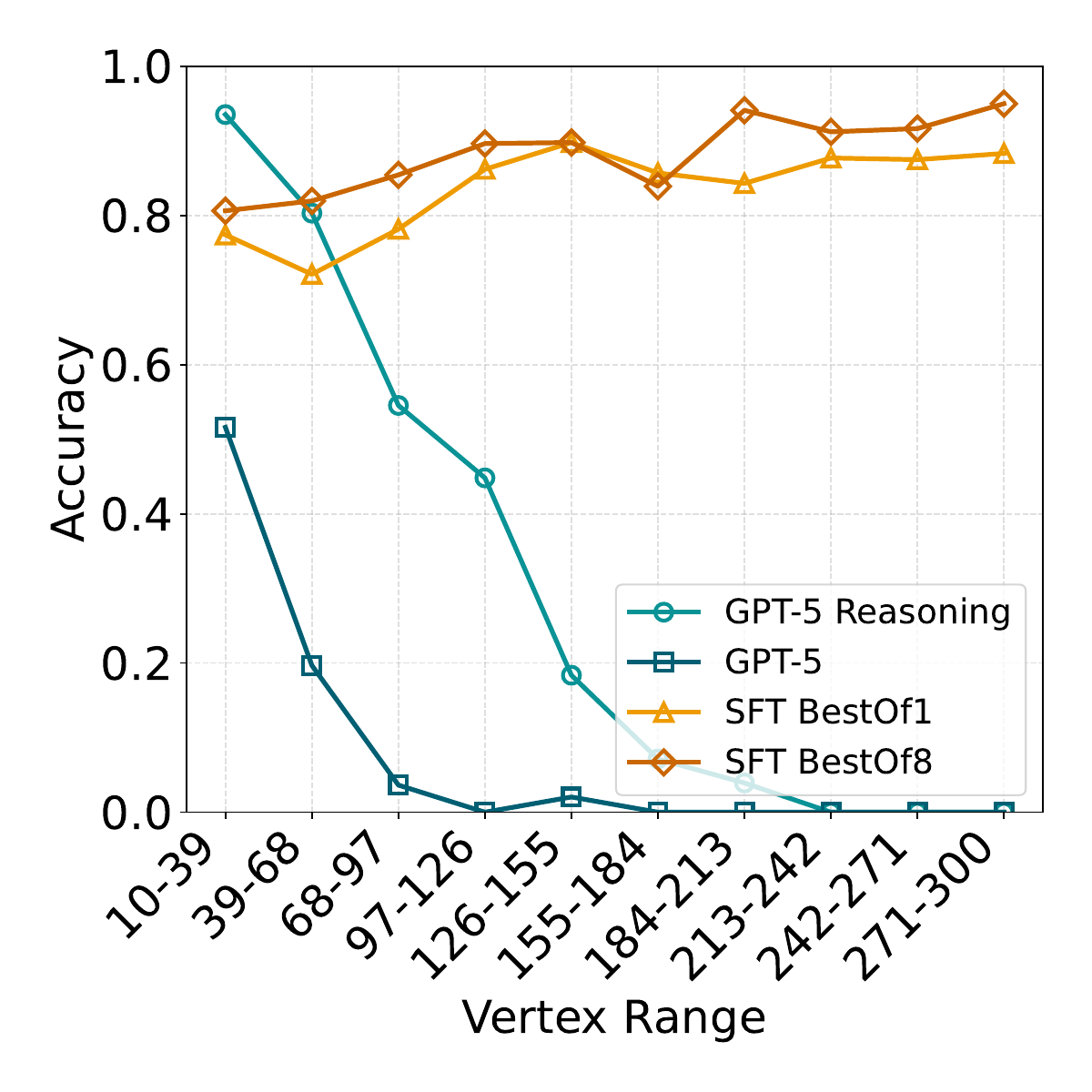}
        \caption{SAT}
        \label{fig:3-coloring-yes}
    \end{subfigure}
    \hfill
    \begin{subfigure}[b]{0.325\linewidth}
        \centering
        \includegraphics[width=\linewidth]{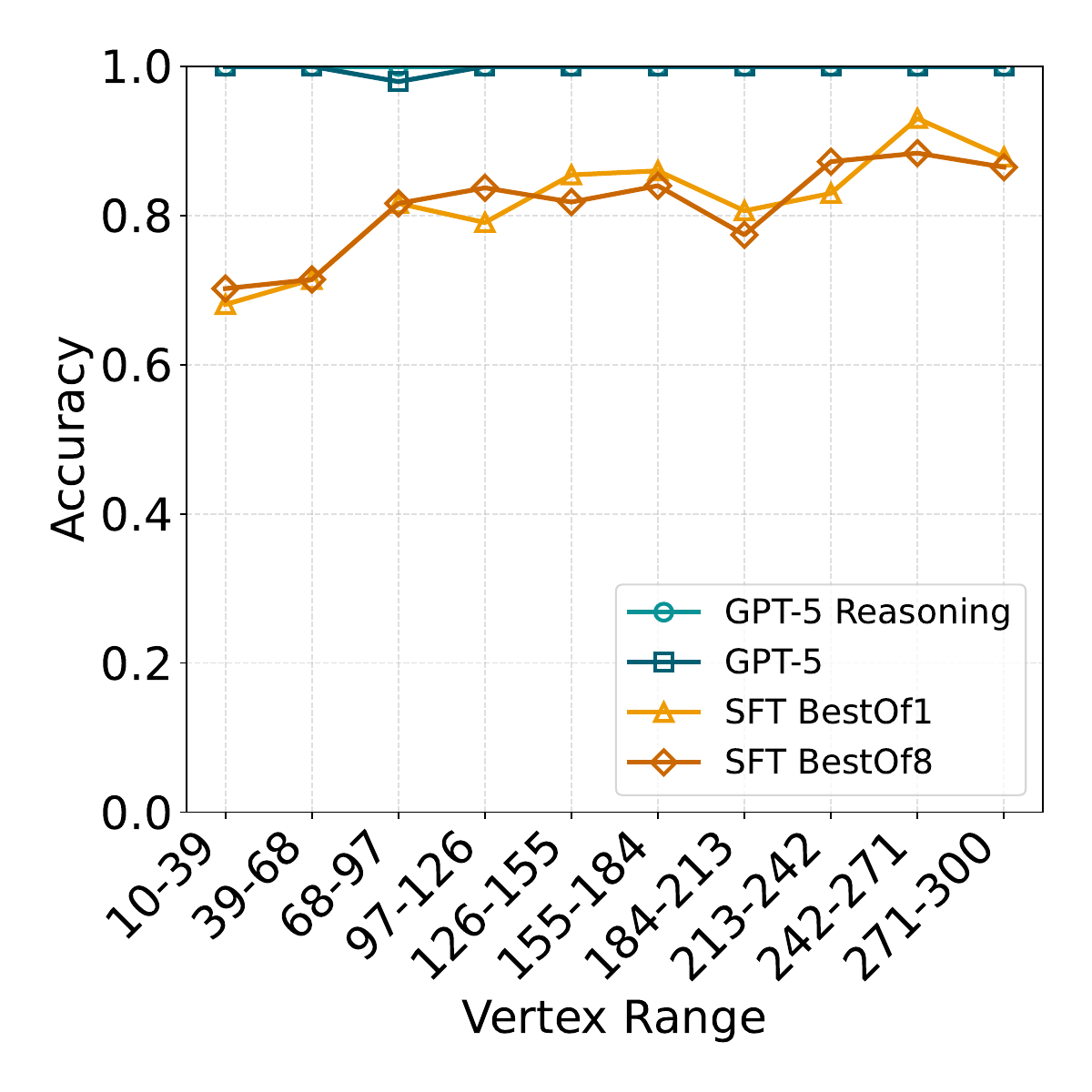}
        \caption{UNSAT}
        \label{fig:3-coloring-no}
    \end{subfigure}
    
    \caption{Accuracy breakdown by instance size. Subplots show results for (a) the full dataset, (b) SAT instances only, and (c) UNSAT instances only.}
    \label{fig:3-coloring-accuracy}
\end{figure}

Table~\ref{table:3-coloring} and Figure~\ref{fig:3-coloring-accuracy} report the corresponding results on 3-coloring. The same qualitative pattern appears. GPT-5.2 and GPT-5.2-Reasoning again achieve near-perfect performance on UNSAT instances, but their SAT accuracy collapses to 5.89\% and 28.33\%, respectively. This produces high apparent confidence on infeasibility detection but poor balanced performance overall.

In contrast, our fine-tuned model remains much more symmetric across the two classes. Best-of-1 already reaches 83.1\% infeasibility detection, with 83.84\% SAT accuracy and 82.28\% UNSAT accuracy, while Best-of-8 improves overall infeasibility detection to 89.2\% and raises SAT accuracy to 95.82\%. Although UNSAT accuracy is slightly lower than that of GPT-5.2-Reasoning, the resulting performance is far more balanced and therefore much more useful in practice. Figure~\ref{fig:3-coloring-accuracy} shows that this advantage is especially pronounced as the instance size grows: GPT-5.2 increasingly defaults to negative answers on larger SAT instances, while our fine-tuned model remains comparatively stable.

\subsection{Warm-Starting feasibility jump}
Next, we evaluate whether LLM outputs can serve as effective initializations for downstream local search. We use Feasibility Jump with a 30-seconds time limit and evaluate only on SAT instances. If a model predicts UNSAT or fails to return a valid structured output, we fall back to a fixed zero initialization. Since GPT-5.2 and GPT-5.2-Reasoning are the strongest non-fine-tuned baselines in infeasibility detection, we restrict the warm-start comparison to these models. To ensure consistency, we perform 5 trials for each method and instance.

\paragraph{Minor-embedding.}
\begin{table}[htbp]
\small
\begin{center}
\begin{tabular}{cccccc}

\toprule
 Model             & SAT Sol. Feas.               & Avg. Error                & FJ Succ.               & FJ time (s) & Gap$\downarrow$\\
\midrule
Random             & N/A            &  N/A           &   87.16\%           &  $5.05\pm 10.29 $     &     16.73\%           \\
GPT5.2             & 8.6\%           &   20.64        & 87.32\%          &     $4.99\pm 10.20 $     &    18.78\%  \\
GPT5.2-Reasoning   & \textbf{31.2\%}         &     14.33         & 87.76\%          & $4.86\pm10.12$  &  15.10\% \\
Llama-8b SFT BestOf1 & 11.2\%          &   15.20        & 87.32\%    &    $4.97\pm 10.25 $        &  16.33\%   \\
Llama-8b SFT BestOf8 & 23.4\%         &  \textbf{13.58}          & \textbf{91.40\%}         &  \textbf{$4.06\pm 9.20$}   &  \textbf{10.00\%}\\
\bottomrule
\end{tabular}
\end{center}
\caption{Feasibility Jump results for minor-embedding. SAT Sol. Feas. is the LLM's solution feasibility rate specifically for the satisfiable (SAT) instances, while Avg. Error measures intial solution's constraint violations. These metrics are N/A for the Random baseline as it is not LLM warm-started. FJ Succ. and FJ time denote success rate and runtime within 30s; Gap represents the relative objective gap ($(\text{method obj} - \text{best known obj}) / \text{method obj}$). }
\label{table:embedding-FJ}
\end{table}

Table~\ref{table:embedding-FJ} shows that better LLM-generated starting points translate into better downstream FJ performance. Among all methods, \emph{Llama-8b SFT BestOf8} produces the lowest average initialization error (13.58), the highest FJ success rate (91.40\%), the lowest final optimality gap (10.00\%), and the fastest convergence time ($4.06 \pm 9.20$ seconds). \emph{GPT-5.2-Reasoning} achieves the highest rate of directly feasible SAT outputs before repair (31.2\%), but its downstream optimization performance remains weaker than the SFT model once initialization quality and final gap are taken into account. 

\paragraph{3-coloring.}
\begin{table}[htbp]
\small
\begin{center}
\begin{tabular}{ccccc}

\toprule
 Model             & SAT Sol. Feas.               & Avg. Error                & FJ Succ.               & FJ time (s)\\
\midrule
Random             & N/A                        &  N/A                         &   94.98\%                       &  2.12$\pm$ 6.83                      \\
GPT5.2             & 4.37\%                 &    30.14                 & 94.71\%                  &     2.21$\pm$ 7.01             \\
GPT5.2-Reasoning   & \textbf{24.33\%}                 &  12.38                  & 95.25\%                  & 2.08$\pm$ 6.70\\
Llama-8b SFT BestOf1 & 2.28\%                 &  15.51                  & 94.80\%                  &    2.13$\pm$ 6.87               \\
Llama-8b SFT BestOf8 & \textbf{6.84\%}                 &  \textbf{12.30}                  & \textbf{98.56\%}                    &  \textbf{0.96$\pm$ 3.98}    \\
\bottomrule
\end{tabular}
\end{center}
\caption{Feasibility Jump Results of 3-coloring}
\label{table:3-coloring-FJ}
\end{table}

The same pattern appears in Table~\ref{table:3-coloring-FJ}. Although direct feasible outputs remain rare for all models, \emph{Llama-8b SFT BestOf8} produces the best starting points overall, with the lowest average initialization error (12.30), the highest FJ success rate (98.56\%), and the fastest convergence time ($0.96 \pm 3.98$ seconds). This supports the broader claim of the paper: even when the model is not fully correct on its own, its structured predictions can still provide useful guidance for conventional optimization algorithms.

\subsection{Out-of-distribution performance}

\begin{table}[htbp]
\small
\begin{center}
\begin{tabular}{cccc}

\toprule
Problem            & Inf. Det.               & SAT Acc.               & UNSAT Acc. \\
\midrule
3-coloring             & 88.00\%                        &  94.40\%                       &   81.60\%                                   \\
minor-embedding          &  87.70\%                  &       99.8\%             &        75.6\%                  \\
\bottomrule
\end{tabular}
\end{center}
\caption{Out of distribution performance.}
\label{table:OOD}
\end{table}

Table~\ref{table:OOD} reports out-of-distribution (OOD) generalization. Our model maintains strong performance on both problems, achieving 88.0\% infeasibility detection on 3-coloring and 87.7\% on minor-embedding. The OOD results are especially strong on SAT instances, where accuracy remains at 94.4\% for 3-coloring and 99.8\% for minor-embedding. Although UNSAT accuracy drops relative to the in-distribution setting, these results still indicate that the learned infeasibility-aware procedure transfers beyond the training distribution.

\subsection{Comparison with MILP solver}
We compare our framework with model-then-solve approaches, a common paradigm for applying LLMs to optimization~\citep{huang2025orlm,jiang2024llmopt,optimus03_2024}. In this baseline, the LLM generates both the optimization model and the solver code; details are given in Appendix~\ref{appendix:model-then-solve}. We test two Gurobi settings: default mode and the NoRel heuristic (gurobi's version of FJ). Default mode can find feasible solutions on SAT instances and certify infeasibility on UNSAT instances, whereas NoRel focuses on quickly finding feasible solutions and cannot certify infeasibility. For SAT instances, we report solution finding success rate on SAT instances, average relative gap to the best known objective ($(\text{method obj} - \text{best known obj}) / \text{method obj}$), and runtime. For UNSAT instances, we report the rate of correctly certifying infeasibility when applicable. Gurobi default mode is run for 60 seconds, and NoRel for 30 seconds.

\begin{table}[htbp]
\small
\begin{center}
\begin{tabular}{ccccc}

\toprule
    Method           & SAT Succ.              & Gap$\downarrow$               &UNSAT Succ.  & Time (s)\\
\midrule
Gurobi            & 54.6\%                        &  6.23\%                       &   17.2\%            & 47.02                       \\
Gurobi NoRel Heuristic          &  72.8\%                  &       5.76\%             &        N/A        &    25.32      \\
Llama-8b SFT BestOf8 + FJ& 93.8 \%                        &  8.02\%                       & 99.8\% & 10.22\\
\bottomrule
\end{tabular}
\end{center}
\caption{Comparison with Gurobi.}
\label{table:gurobi}
\end{table}
Table~\ref{table:gurobi} shows that our fine-tuned model achieves the highest SAT success rate and by far the strongest UNSAT performance, while also requiring the least runtime. Although Gurobi attains slightly smaller objective gaps on the SAT instances it solves, its coverage is much lower.

\section{Conclusion}
This paper proposes an infeasibility-aware framework for LLM-based combinatorial optimization that jointly addresses infeasibility detection and LLM-guided warm starts. For minor-embedding, we introduce a new mathematical programming formulation with zero-phase infeasibility screening, enabling the scalable generation of feasible and certifiably infeasible training instances. Our fine-tuned 8B model achieves 99.9\% overall accuracy on minor-embedding, far exceeding GPT-5.2 (78.5\%), and its warm-started pipeline reaches 91.4\% Feasibility Jump success and compares favorably with MILP solver baselines. We further show that the framework transfers beyond a single problem setting through strong results on graph coloring. Future work includes extending the framework to additional optimization problems, constructing richer reasoning trajectories for infeasibility detection, and improving scalability on long structured inputs through better representations and tighter integration with downstream optimization algorithms.
\clearpage



\bibliography{colm2026_conference}

@article{barabasi1999emergence,
  title={Emergence of scaling in random networks},
  author={Barab{\'a}si, Albert-L{\'a}szl{\'o} and Albert, R{\'e}ka},
  journal={science},
  volume={286},
  number={5439},
  pages={509--512},
  year={1999},
  publisher={American Association for the Advancement of Science}
}

@article{erdds1959random,
  title={On random graphs I},
  author={ERDdS, P and R\&wi, A},
  journal={Publ. math. debrecen},
  volume={6},
  number={290-297},
  pages={18},
  year={1959}
}

@article{watts1998collective,
  title={Collective dynamics of ‘small-world’networks},
  author={Watts, Duncan J and Strogatz, Steven H},
  journal={nature},
  volume={393},
  number={6684},
  pages={440--442},
  year={1998},
  publisher={Nature Publishing Group}
}

@techreport{hagberg2007exploring,
  title={Exploring network structure, dynamics, and function using NetworkX},
  author={Hagberg, Aric and Swart, Pieter J and Schult, Daniel A},
  year={2007},
  institution={Los Alamos National Laboratory (LANL)}
}

@article{hu2022lora,
  title={Lora: Low-rank adaptation of large language models.},
  author={Hu, Edward J and Shen, Yelong and Wallis, Phillip and Allen-Zhu, Zeyuan and Li, Yuanzhi and Wang, Shean and Wang, Liang and Chen, Weizhu and others},
  journal={Iclr},
  volume={1},
  number={2},
  pages={3},
  year={2022}
}

@article{wang2022self,
  title={Self-consistency improves chain of thought reasoning in language models},
  author={Wang, Xuezhi and Wei, Jason and Schuurmans, Dale and Le, Quoc and Chi, Ed and Narang, Sharan and Chowdhery, Aakanksha and Zhou, Denny},
  journal={arXiv preprint arXiv:2203.11171},
  year={2022}
}

@inproceedings{yao2022react,
  title={React: Synergizing reasoning and acting in language models},
  author={Yao, Shunyu and Zhao, Jeffrey and Yu, Dian and Du, Nan and Shafran, Izhak and Narasimhan, Karthik R and Cao, Yuan},
  booktitle={The eleventh international conference on learning representations},
  year={2022}
}

@inproceedings{huang2022language,
  title={Language models as zero-shot planners: Extracting actionable knowledge for embodied agents},
  author={Huang, Wenlong and Abbeel, Pieter and Pathak, Deepak and Mordatch, Igor},
  booktitle={International conference on machine learning},
  pages={9118--9147},
  year={2022},
  organization={PMLR}
}

@article{luteberget2023feasibility,
  title={Feasibility Jump: an LP-free Lagrangian MIP heuristic},
  author={Luteberget, Bj{\o}rnar and Sartor, Giorgio},
  journal={Mathematical Programming Computation},
  volume={15},
  number={2},
  pages={365--388},
  year={2023},
  publisher={Springer}
}

@misc{gurobi,
  author = {{Gurobi Optimization, LLC}},
  title = {{Gurobi Optimizer Reference Manual}},
  year = 2026,
  url = "https://www.gurobi.com"
}

@misc{ortools,
  title = {OR-Tools},
  version = { v9.12 },
  author = {Laurent Perron and Vincent Furnon},
  organization = {Google},
  url = {https://developers.google.com/optimization/},
  year = {2025}
}

@book{wolsey2014integer,
  title={Integer and combinatorial optimization},
  author={Wolsey, Laurence A and Nemhauser, George L},
  year={2014},
  publisher={John Wiley \& Sons}
}

@article{danna2005exploring,
  title={Exploring relaxation induced neighborhoods to improve MIP solutions},
  author={Danna, Emilie and Rothberg, Edward and Pape, Claude Le},
  journal={Mathematical Programming},
  volume={102},
  number={1},
  pages={71--90},
  year={2005},
  publisher={Springer}
}

@incollection{lodi2009mixed,
  title={Mixed integer programming computation},
  author={Lodi, Andrea},
  booktitle={50 Years of Integer Programming 1958-2008: From the Early Years to the State-of-the-Art},
  pages={619--645},
  year={2009},
  publisher={Springer}
}

@article{fischetti2003local,
  title={Local branching},
  author={Fischetti, Matteo and Lodi, Andrea},
  journal={Mathematical programming},
  volume={98},
  number={1},
  pages={23--47},
  year={2003},
  publisher={Springer}
}

@article{berthold2013measuring,
  title={Measuring the impact of primal heuristics},
  author={Berthold, Timo},
  journal={Operations Research Letters},
  volume={41},
  number={6},
  pages={611--614},
  year={2013},
  publisher={Elsevier}
}

@article{fischetti2005feasibility,
  title={The feasibility pump},
  author={Fischetti, Matteo and Glover, Fred and Lodi, Andrea},
  journal={Mathematical Programming},
  volume={104},
  number={1},
  pages={91--104},
  year={2005},
  publisher={Springer}
}

@article{da2025large,
  title={Large language models for combinatorial optimization: A systematic review},
  author={Da Ros, Francesca and Soprano, Michael and Di Gaspero, Luca and Roitero, Kevin},
  journal={ACM Computing Surveys},
  year={2025},
  publisher={ACM New York, NY}
}

@article{bengio2021machine,
  title={Machine learning for combinatorial optimization: a methodological tour d’horizon},
  author={Bengio, Yoshua and Lodi, Andrea and Prouvost, Antoine},
  journal={European Journal of Operational Research},
  volume={290},
  number={2},
  pages={405--421},
  year={2021},
  publisher={Elsevier}
}

@inproceedings{yang2023large,
  title={Large language models as optimizers},
  author={Yang, Chengrun and Wang, Xuezhi and Lu, Yifeng and Liu, Hanxiao and Le, Quoc V and Zhou, Denny and Chen, Xinyun},
  booktitle={The Twelfth International Conference on Learning Representations},
  year={2023}
}

@article{jiang2025large,
  title={Large language models as end-to-end combinatorial optimization solvers},
  author={Jiang, Xia and Wu, Yaoxin and Li, Minshuo and Cao, Zhiguang and Zhang, Yingqian},
  journal={arXiv preprint arXiv:2509.16865},
  year={2025}
}

@inproceedings{liu2024evolutionary,
  title={Large language models as evolutionary optimizers},
  author={Liu, Shengcai and Chen, Caishun and Qu, Xinghua and Tang, Ke and Ong, Yew-Soon},
  booktitle={2024 IEEE Congress on Evolutionary Computation (CEC)},
  pages={1--8},
  year={2024},
  organization={IEEE}
}

@inproceedings{xiao2023chain,
  title={Chain-of-experts: When llms meet complex operations research problems},
  author={Xiao, Ziyang and Zhang, Dongxiang and Wu, Yangjun and Xu, Lilin and Wang, Yuan Jessica and Han, Xiongwei and Fu, Xiaojin and Zhong, Tao and Zeng, Jia and Song, Mingli and others},
  booktitle={The twelfth international conference on learning representations},
  year={2023}
}

@article{jiang2024llmopt,
  title={LLMOPT: Learning to Define and Solve General Optimization Problems from Scratch},
  author={Jiang, Caigao and Shu, Xiang and Qian, Hong and Lu, Xingyu and Zhou, Jun and Zhou, Aimin and Yu, Yang},
  journal={arXiv preprint arXiv:2410.13213},
  year={2024}
}

@article{huang2025orlm,
  title={ORLM: A customizable framework in training large models for automated optimization modeling},
  author={Huang, Chenyu and Tang, Zhengyang and Hu, Shixi and Jiang, Ruoqing and Zheng, Xin and Ge, Dongdong and Wang, Benyou and Wang, Zizhuo},
  journal={Operations Research},
  year={2025},
  publisher={INFORMS}
}

@article{optimus2023,
  title         = {{OptiMUS}: Optimization Modeling Using {MIP} Solvers and Large Language Models},
  author        = {AhmadiTeshnizi, Ali and Gao, Wenzhi and Udell, Madeleine},
  journal       = {arXiv preprint arXiv:2310.06116},
  year          = {2023},
  eprint        = {2310.06116},
  archivePrefix = {arXiv},
  primaryClass  = {cs.AI},
  doi           = {10.48550/arXiv.2310.06116},
  url           = {https://arxiv.org/abs/2310.06116}
}

@article{optimus03_2024,
  title         = {{OptiMUS-0.3}: Using Large Language Models to Model and Solve Optimization Problems at Scale},
  author        = {AhmadiTeshnizi, Ali and Gao, Wenzhi and Brunborg, Herman and Talaei, Shayan and Lawless, Connor and Udell, Madeleine},
  journal       = {arXiv preprint arXiv:2407.19633},
  year          = {2024},
  eprint        = {2407.19633},
  archivePrefix = {arXiv},
  primaryClass  = {cs.AI},
  doi           = {10.48550/arXiv.2407.19633},
  url           = {https://arxiv.org/abs/2407.19633}
}

@article{ye2024reevo,
  title={ReEvo: Large language models as hyper-heuristics with reflective evolution},
  author={Ye, Haoran and Wang, Jiarui and Cao, Zhiguang and Berto, Federico and Hua, Chuanbo and Kim, Haeyeon and Park, Jinkyoo and Song, Guojie},
  journal={Advances in neural information processing systems},
  volume={37},
  pages={43571--43608},
  year={2024}
}

@article{funsearch2024,
  title   = {Mathematical discoveries from program search with large language models},
  author  = {Romera-Paredes, Bernardino and Barekatain, Mohammadamin and Novikov, Alexander and Balog, Matej and Kumar, M. Pawan and Dupont, Emilien and Ruiz, Francisco J. R. and Ellenberg, Jordan S. and Wang, Pengming and Fawzi, Omar and Kohli, Pushmeet and Fawzi, Alhussein},
  journal = {Nature},
  volume  = {625},
  number  = {7995},
  pages   = {468--475},
  year    = {2024},
  doi     = {10.1038/s41586-023-06924-6},
  url     = {https://doi.org/10.1038/s41586-023-06924-6}
}

@article{surina2025algorithm,
  title={Algorithm discovery with llms: Evolutionary search meets reinforcement learning},
  author={Surina, Anja and Mansouri, Amin and Quaedvlieg, Lars and Seddas, Amal and Viazovska, Maryna and Abbe, Emmanuel and Gulcehre, Caglar},
  journal={arXiv preprint arXiv:2504.05108},
  year={2025}
}

@article{ye2023satlm,
  title={Satlm: Satisfiability-aided language models using declarative prompting},
  author={Ye, Xi and Chen, Qiaochu and Dillig, Isil and Durrett, Greg},
  journal={Advances in Neural Information Processing Systems},
  volume={36},
  pages={45548--45580},
  year={2023}
}

@article{zhang2024dila,
  title={DiLA: Enhancing LLM tool learning with differential logic layer},
  author={Zhang, Yu and Zhen, Hui-Ling and Pei, Zehua and Lian, Yingzhao and Yin, Lihao and Yuan, Mingxuan and Yu, Bei},
  journal={arXiv preprint arXiv:2402.11903},
  year={2024}
}

@article{pan2024can,
  title={Can transformers reason logically? a study in sat solving},
  author={Pan, Leyan and Ganesh, Vijay and Abernethy, Jacob and Esposo, Chris and Lee, Wenke},
  journal={arXiv preprint arXiv:2410.07432},
  year={2024}
}

@article{hazra2025have,
  title={Have large language models learned to reason? a characterization via 3-sat phase transition},
  author={Hazra, Rishi and Venturato, Gabriele and Martires, Pedro Zuidberg Dos and De Raedt, Luc},
  journal={arXiv preprint arXiv:2504.03930},
  year={2025}
}

@inproceedings{wei2025satbench,
  title={Satbench: Benchmarking llms’ logical reasoning via automated puzzle generation from sat formulas},
  author={Wei, Anjiang and Wu, Yuheng and Wan, Yingjia and Suresh, Tarun and Tan, Huanmi and Zhou, Zhanke and Koyejo, Sanmi and Wang, Ke and Aiken, Alex},
  booktitle={Proceedings of the 2025 Conference on Empirical Methods in Natural Language Processing},
  pages={33820--33837},
  year={2025}
}

@article{chen2024diagnosing,
  title={Diagnosing infeasible optimization problems using large language models},
  author={Chen, Hao and Constante-Flores, Gonzalo E and Li, Can},
  journal={INFOR: Information Systems and Operational Research},
  volume={62},
  number={4},
  pages={573--587},
  year={2024},
  publisher={Taylor \& Francis}
}

@inproceedings{bernal2020integer,
  title={Integer programming techniques for minor-embedding in quantum annealers},
  author={Bernal, David E and Booth, Kyle EC and Dridi, Raouf and Alghassi, Hedayat and Tayur, Sridhar and Venturelli, Davide},
  booktitle={International Conference on Integration of Constraint Programming, Artificial Intelligence, and Operations Research},
  pages={112--129},
  year={2020},
  organization={Springer}
}

@article{kadowaki1998quantum,
  title={Quantum annealing in the transverse {I}sing model},
  author={Kadowaki, Tadashi and Nishimori, Hidetoshi},
  journal={Physical Review E},
  volume={58},
  number={5},
  pages={5355--5363},
  year={1998},
  publisher={APS}
}

@article{choi2008minor,
  title={Minor-embedding in adiabatic quantum computation: {I}. {T}he parameter setting problem},
  author={Choi, Vicky},
  journal={Quantum Information Processing},
  volume={7},
  number={5},
  pages={193--209},
  year={2008},
  publisher={Springer}
}

@article{choi2011minor,
  title={Minor-embedding in adiabatic quantum computation: {II}. {M}inor-universal graph design},
  author={Choi, Vicky},
  journal={Quantum Information Processing},
  volume={10},
  number={3},
  pages={343--353},
  year={2011},
  publisher={Springer}
}

@article{lobe2021minor,
  title={Minor embedding in broken {C}himera and {P}egasus graphs is {NP}-complete},
  author={Lobe, Elisabeth and Lutz, Annette},
  journal={arXiv preprint arXiv:2110.08325},
  year={2021}
}

@article{fang2020minimizing,
  title={Minimizing minor embedding energy: an application in quantum annealing},
  author={Fang, Yan-Long and Warburton, P A},
  journal={Quantum Information Processing},
  volume={19},
  pages={191},
  year={2020},
  publisher={Springer}
}

@article{cai2014practical,
  title={A practical heuristic for finding graph minors},
  author={Cai, Jun and Macready, William G and Roy, Aidan},
  journal={arXiv preprint arXiv:1406.2741},
  year={2014}
}

@article{boothby2016fast,
  title={Fast clique minor generation in {C}himera qubit connectivity graphs},
  author={Boothby, Tomas and King, Andrew D and Roy, Aidan},
  journal={Quantum Information Processing},
  volume={15},
  number={1},
  pages={495--508},
  year={2016},
  publisher={Springer}
}

@inproceedings{zbinden2020embedding,
  title={Embedding algorithms for quantum annealers with {C}himera and {P}egasus connection topologies},
  author={Zbinden, Simon and B{\"a}rtschi, Andreas and Djidjev, Hristo and Eidenbenz, Stephan},
  booktitle={International Conference on High Performance Computing},
  pages={187--206},
  year={2020},
  organization={Springer}
}

@article{sugie2021minor,
  title={Minor-embedding heuristics for large-scale annealing processors with sparse hardware graphs of up to 102,400 nodes},
  author={Sugie, Yuki and Yoshida, Yuki and Mertig, Normann and Takemoto, Takashi and Teramoto, Hiroshi and Nakamura, Atsushi and Takigawa, Ichigaku and Minato, Shin-ichi and Yamaoka, Masanao and Komatsuzaki, Tamiki},
  journal={Soft Computing},
  volume={25},
  number={3},
  pages={1731--1749},
  year={2021},
  publisher={Springer}
}

@article{dridi2018algebraic,
  title={A novel algebraic geometry compiling framework for adiabatic quantum computations},
  author={Dridi, Raouf and Alghassi, Hedayat and Tayur, Sridhar},
  journal={arXiv preprint arXiv:1810.01440},
  year={2018}
}

@article{nembrini2025minor,
  title={Minor Embedding for Quantum Annealing with Reinforcement Learning},
  author={Nembrini, Riccardo and Dacrema, Maurizio Ferrari and Cremonesi, Paolo},
  journal={arXiv preprint arXiv:2507.16004},
  year={2025}
}

@Inbook{karp2009reducibility,
  title={Reducibility Among Combinatorial Problems},
  author={Karp, Richard M},
  booktitle={50 Years of Integer Programming 1958-2008: from the Early Years to the State-of-the-Art},
  pages={219--241},
  year={2010},
  publisher={Springer}
}

@article{llama31herd,
      title={The Llama 3 Herd of Models}, 
      author={Grattafiori, Aaron and Dubey, Abhimanyu and Jauhri, Abhinav and Pandey, Abhinav and Kadian, Abhishek and Al-Dahle, Ahmad and Letman, Aiesha and Mathur, Akhil and Schelten, Alan and Vaughan, Alex and others},
      journal={arXiv preprint arXiv:2407.21783},
      year={2024}
}

@misc{unsloth,
  author = {Daniel Han, Michael Han and Unsloth team},
  title = {Unsloth},
  url = {https://github.com/unslothai/unsloth},
  year = {2023}
}

@misc{dwave_networkx,
    author       = {{D-Wave Systems Inc.}},
    title        = {dwave-networkx: A NetworkX extension providing graphs and algorithms relevant to working with the D-
  Wave System},
    year         = {2025},
    howpublished = {\url{https://github.com/dwavesystems/dwave-networkx}},
    note         = {Version 0.8.18; accessed 2026-03-28}
  }
\bibliographystyle{colm2026_conference}

\appendix

\section{Background of Mixed-Integer Linear Programming and Primal Heuristic}
\label{appendix:milp_background}

\paragraph{Mixed-integer linear programming.}
Mixed-integer linear programming (MILP) is a standard modeling framework for decision problems with a linear objective and linear constraints, where some decision variables are additionally required to take integer values, often binary values in $\{0,1\}$~\citep{wolsey2014integer, lodi2009mixed}. A generic MILP can be written as
$$
\min \; c^\top x \qquad
\text{s.t. } Ax \le b,\quad \ell \le x \le u,\quad x_j \in \mathbb{Z}\;\; \forall j \in I,
$$
where the integrality constraints encode discrete decisions such as selection, assignment, activation, or ordering. This makes MILP a flexible framework for many combinatorial optimization problems.

Modern MILP solvers typically rely on linear programming (LP) based branch-and-bound (together with cutting planes, presolve, and other enhancements)~\citep{wolsey2014integer}. The basic idea is to first drop the integrality requirements and solve the resulting linear programming relaxation. If the LP solution already satisfies integrality, then it is an optimal solution of the MILP. Otherwise, the solver branches on a fractional variable, creating smaller subproblems that are explored in a search tree. During this process, the solver maintains the best feasible integer solution found so far, called the \emph{incumbent}, as well as a bound from the remaining unexplored nodes. The gap between these two quantities measures how far the current incumbent may still be from optimality.

\paragraph{Primal heuristics.}
While branch-and-bound is designed to certify optimality, finding a good feasible solution early is often just as important in practice. This is the role of \emph{primal heuristics}: procedures that attempt to construct feasible integer solutions quickly, without necessarily proving optimality~\citep{fischetti2005feasibility,berthold2013measuring}. In a branch-and-bound solver, primal heuristics are used to improve the incumbent, which can immediately strengthen pruning and accelerate the overall search.

Primal heuristics can be very simple or quite sophisticated. Basic examples include rounding or diving heuristics based on the LP relaxation, while more advanced examples include \emph{Feasibility Pump}~\citep{fischetti2005feasibility}, \emph{Local Branching}~\citep{fischetti2003local}, and \emph{Relaxation Induced Neighborhood Search (RINS)}~\citep{danna2005exploring}. Conceptually, these methods trade exactness for speed: they try to find a feasible solution that is good enough to guide the search, even if they cannot certify that the solution is globally optimal.

\section{Background of Quantum Annealing and Minor-Embedding}
Quantum annealing (QA) is a hardware-based approach for approximately solving combinatorial optimization problems on a quantum processing unit (QPU)~\citep{kadowaki1998quantum}. In the standard QA setting, the objective is written as an Ising model, or equivalently a quadratically unconstrained binary optimization (QUBO), over variables with pairwise interactions. This induces a logical problem graph $P=(V_P,E_P)$, where each vertex represents a logical variable and each edge represents a nonzero interaction that must be realized on the device. The QPU, however, does not provide arbitrary all-to-all connectivity. Instead, it implements a fixed sparse hardware graph $G=(V_G,E_G)$ whose vertices are physical qubits and whose edges are programmable couplers.

Two representative hardware topologies are shown in Figure~\ref{fig:dwave-topologies}: Chimera and Pegasus. Pegasus offers higher local connectivity than Chimera, but both remain much sparser than many logical problem graphs of interest. Consequently, the logical graph usually cannot be mapped to the hardware graph directly. One must first construct a \emph{minor-embedding} of $P$ into $G$ before the instance can be executed on the device~\citep{choi2008minor,choi2011minor}.

\begin{figure}[htbp]
\centering
\includegraphics[width=0.98\linewidth]{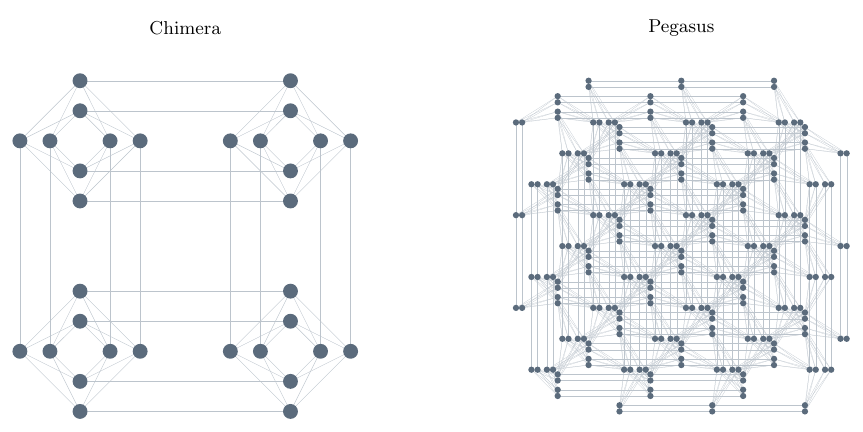}
\caption{Representative sparse hardware topologies used in quantum annealing: Chimera (left) and Pegasus (right). Pegasus has higher local connectivity than Chimera, but both remain sparse compared with many
  logical problem graphs. The figure was generated using the open-source \texttt{dwave\_networkx} package from exact Chimera and Pegasus graph constructions~\citep{dwave_networkx}; Chimera is shown with size parameter $(m=2, n=2, t=4)$ and
  Pegasus with $m=4$.}
\label{fig:dwave-topologies}
\end{figure}

In a minor-embedding, each problem vertex $i\in V_P$ is assigned to a nonempty connected set of hardware vertices, called a \emph{chain}. The qubits in a chain are coupled so that, during the anneal, they act as a single effective logical variable. Chains assigned to different problem vertices must be disjoint, and for every problem edge $\{i,k\}\in E_P$, there must be at least one hardware edge connecting the chain of $i$ to the chain of $k$. Thus, hardware couplers between chains realize the logical interactions required by the original problem.

This preprocessing step is essential because sparse hardware connectivity is an architectural constraint rather than a modeling choice. The embedding problem is also computationally hard in its own right~\citep{lobe2021minor}. In addition, embedding quality matters: longer chains use more hardware resources and are more prone to chain breaks during annealing, which can degrade solution quality~\citep{fang2020minimizing}. For this reason, the main body focuses not only on whether an embedding exists, but on whether one exists under an explicit chain-length limit.

\section{Additional formulation details for minor-embedding}
\label{appendix:minor-embedding}
\subsection{Implementation details of the chain-enumeration formulation}
\label{appendix:minor-embedding1}
We now provide additional implementation details for the chain-enumeration formulation. Although the formulation
\eqref{eq:objective-main}--\eqref{eq:feas-adj-main} is compact once the admissible chain family is fixed, the practical performance depends strongly on preprocessing, variable pruning, and the manner in which adjacency constraints~\eqref{eq:feas-adj-main} are enforced.

\paragraph{Chain enumeration and preprocessing.}
Given a hardware graph $G=(V_G,E_G)$ and a prescribed maximum chain size $L$, we first enumerate all connected hardware subgraphs $C \subseteq V_G$ with $|C| \le L$. For each enumerated chain $C \in \mathcal{C}$, we precompute its size $s(C)$, its set of internal hardware vertices, the number of internal hardware edges induced by $C$, and its external neighborhood
$$
\partial(C):=\{w \in V_G \setminus C : \exists\, u\in C \text{ with } \{u,w\}\in E_G\}.
$$
We also precompute the set of neighboring chains
$$
\Gamma(C):=\{D\in\mathcal{C} : \exists\, \{u,v\}\in E_G \text{ with } u\in C,\ v\in D\},
$$
which is later used to enforce adjacency compatibility between chains assigned to adjacent problem vertices.

\paragraph{Zero-phase infeasibility screening.}
Before constructing the optimization model, we apply Proposition~\ref{prop:zero-phase-main} as a zero-phase infeasibility screen. Let $\Delta_G:=\max_{v\in V_G}\deg_G(v)$, and for each $i\in V_P$ define
$$
s_i^{\min}:=
\max\!\left(1,\left\lceil\frac{\deg_P(i)-2}{\Delta_G-2}\right\rceil\right),
$$
which is the minimum chain size required by \eqref{eq:zp-degree-bound-main}. If $s_i^{\min}>L$ for any $i$, then no embedding with maximum chain size $L$ can exist. We then test the global vertex and edge bounds
$$
\sum_{i\in V_P} s_i^{\min} \leq |V_G|,
\qquad
|E_P|+\sum_{i\in V_P}(s_i^{\min}-1)\leq |E_G|.
$$
Violation of any of these three conditions certifies infeasibility, so the instance is discarded before chain enumeration or mixed-integer optimization.

\paragraph{Sparse variable generation.}
Rather than creating all variables $x_{iC}$ for $i\in V_P$ and $C\in\mathcal{C}$, we generate them selectively. A variable $x_{iC}$ is introduced only if the chain $C$ is large enough to support the problem vertex $i$, i.e.,
$$
s(C) \ge s_i^{\min},
$$
and if the chain has sufficient external neighbors to realize all incident problem edges, i.e.,
$$
|\partial(C)| \geq \deg_P(i).
$$
Equivalently, for each problem vertex $i$, we define the candidate chain set
$$
\mathcal{C}_i := \{C \in \mathcal{C} : s(C)\geq s_i^{\min},\ |\partial(C)|\geq \deg_P(i)\},
$$
and create variables only for pairs $(i,C)$ with $C\in\mathcal{C}_i$. If $\mathcal{C}_i=\emptyset$ for some $i$, the instance is certified infeasible. This filtering step can remove a substantial fraction of candidate variables and significantly reduces model size.

\paragraph{Edge-budget strengthening.}
In addition, we include a global edge-budget inequality. Let $e(C)$ denote the number of internal hardware edges induced by chain $C$. Since each selected chain of size $s(C)$ consumes internal hardware edges, and each problem edge requires at least one hardware edge between two selected chains, any feasible embedding must satisfy
$$
\sum_{i\in V_P}\sum_{C\in \mathcal{C}_i} e(C)x_{iC} + |E_P| \le |E_G|.
$$
This inequality is a valid strengthening cut that helps eliminate structurally impossible assignments early in the search.

\paragraph{Adjacency enforcement.}
The adjacency requirements in \eqref{eq:feas-adj-main} are enforced via \emph{lazy cuts} in Gurobi. We first solve the model with only \eqref{eq:feas-assign-main}--\eqref{eq:feas-disjoint-main}. For any incumbent integer solution, let $C_i^\star$ be the selected chain for each problem vertex $i$. For every problem edge $(i,k)\in A_P$, we check whether $C_i^\star$ and $C_k^\star$ are compatible, i.e., whether $C_k^\star \in \Gamma(C_i^\star)$. If not, then the adjacency condition for $(i,k)$ is violated, and we lazily add the missing constraint
$$
x_{i,C_i^\star} \le \sum_{D\in \Gamma(C_i^\star)} x_{kD}.
$$
Hence, adjacency constraints are generated only for chain selections that actually appear in incumbent solutions.

\subsection{Alternative formulation for minor-embedding}
\label{appendix:minor-embedding2}
As an alternative to enumerating admissible chains, one may model the minor-embedding problem directly on the hardware graph. In this approach, each problem vertex is assigned a subset of hardware vertices, and connectivity is enforced dynamically through feasibility cuts rather than by explicit inequalities. Following \citet{bernal2020integer}, this leads to a decomposition-based formulation in which the master problem enforces assignment, disjointness, and edge realization, while connectivity is checked separately.

\paragraph{Sets.}
Let $G=(V_G,E_G)$ denote the undirected hardware graph and let $P=(V_P,E_P)$ denote the problem graph.

\paragraph{Decision variables.}
For each $i\in V_P$ and $j\in V_G$, let $z_{ij}\in\{0,1\}$ indicate whether hardware vertex $j$ is assigned to the chain of problem vertex $i$. For each problem edge $\{i,k\}\in E_P$ and hardware edge $\{u,v\}\in E_G$, let $y_{ikuv}\in\{0,1\}$ indicate whether hardware edge $\{u,v\}$ is selected to realize problem edge $\{i,k\}$. Optionally, for each $i\in V_P$ and $\{u,v\}\in E_G$, let $f_{iuv}\in\{0,1\}$ indicate whether hardware edge $\{u,v\}$ is used internally within the chain of $i$.

\paragraph{Master problem.}
The master problem is
\begin{align}
\min\ & \sum_{i\in V_P}\sum_{j\in V_G} z_{ij} \label{eq:decomp-obj}\\
\text{s.t. } 
& \sum_{i\in V_P} z_{ij} \le 1
&& \forall j\in V_G, \label{eq:decomp-disjoint}\\
& \sum_{j\in V_G} z_{ij} \ge 1
&& \forall i\in V_P, \label{eq:decomp-nonempty}\\
& \sum_{\{u,v\}\in E_G} y_{ikuv} = 1
&& \forall \{i,k\}\in E_P, \label{eq:decomp-witness}\\
& y_{ikuv} \le z_{iu}+z_{iv}
&& \forall \{i,k\}\in E_P,\ \forall \{u,v\}\in E_G, \label{eq:decomp-y1}\\
& y_{ikuv} \le z_{ku}+z_{kv}
&& \forall \{i,k\}\in E_P,\ \forall \{u,v\}\in E_G, \label{eq:decomp-y2}\\
& z_{ij}\in\{0,1\}
&& \forall i\in V_P,\ \forall j\in V_G, \label{eq:decomp-zbin}\\
& y_{ikuv}\in\{0,1\}
&& \forall \{i,k\}\in E_P,\ \forall \{u,v\}\in E_G. \label{eq:decomp-ybin}
\end{align}
The objective \eqref{eq:decomp-obj} minimizes the total number of hardware vertices used. Constraints \eqref{eq:decomp-disjoint} enforce disjointness of chains, constraints \eqref{eq:decomp-nonempty} require each problem vertex to receive a nonempty chain, and constraints \eqref{eq:decomp-witness}--\eqref{eq:decomp-y2} ensure that each problem edge is realized by one hardware edge connecting the corresponding two chains.

\paragraph{Connectivity cuts.}
The master problem does not explicitly enforce that the hardware vertices assigned to a problem vertex induce a connected subgraph. Thus, connectivity is imposed dynamically through lazy cuts. For a current incumbent solution $z^*$ and a fixed $i\in V_P$, define
\begin{align}
\phi(i):=\{j\in V_G : z^*_{ij}=1\}.
\end{align}
If $\phi(i)$ is disconnected in the hardware graph, let $C$ be one connected component of $\phi(i)$ and let
\begin{align}
N(C):=\{w\in V_G\setminus C : \exists\, u\in C \text{ with } \{u,w\}\in E_G\}
\end{align}
denote the neighbors of $C$. Choose any $u\in C$ and any $v\in \phi(i)\setminus C$. Then any connected assignment containing both $u$ and $v$ must select at least one hardware vertex in $N(C)$, which yields the valid inequality
\begin{align}
z_{iu}+z_{iv}-1 \;\le\; \sum_{w\in N(C)} z_{iw}.
\label{eq:decomp-connectivity-cut}
\end{align}
At the current disconnected solution, both $u$ and $v$ are selected while no vertex in $N(C)$ is selected, so \eqref{eq:decomp-connectivity-cut} is violated. The master problem is re-solved with such cuts added until every assigned chain is connected.

\paragraph{Chain-size restrictions.}
We still impose an upper bound $L$ on the size of each chain. This yields the following constraints
\begin{align}
\sum_{j\in V_G} z_{ij} &\le L
&& \forall i\in V_P, \label{eq:decomp-size}\\
z_{iu}+z_{iv} &\le 1
&& \forall i\in V_P,\ \forall u,v\in V_G \text{ with } d_G(u,v)\ge L,
\label{eq:decomp-distance}
\end{align}
where $d_G(u,v)$ denotes the shortest-path distance in $G$. Indeed, if a connected subgraph contains at most $L$ vertices, then any two vertices in it must be at graph distance at most $L-1$.

\paragraph{Optional strengthening.}
The master problem can be strengthened by introducing edge-selection variables $f_{iuv}$ to capture internal edges of a chain. A simple necessary condition for connectivity is that a connected subgraph on $t$ vertices must contain at least $t-1$ edges. This yields
\begin{align}
f_{iuv} &\le z_{iu}
&& \forall i\in V_P,\ \forall \{u,v\}\in E_G, \label{eq:decomp-f1}\\
f_{iuv} &\le z_{iv}
&& \forall i\in V_P,\ \forall \{u,v\}\in E_G, \label{eq:decomp-f2}\\
\sum_{\{u,v\}\in E_G} f_{iuv} &\ge \sum_{j\in V_G} z_{ij}-1
&& \forall i\in V_P, \label{eq:decomp-f3}\\
f_{iuv} &\in \{0,1\}
&& \forall i\in V_P,\ \forall \{u,v\}\in E_G. \label{eq:decomp-fbin}
\end{align}
These inequalities do not by themselves guarantee connectivity, but they tighten the relaxation by ruling out some disconnected assignments.

\paragraph{Formulation size.}
The master problem contains $|V_P||V_G|$ binary variables $z_{ij}$ and $|E_P||E_G|$ binary variables $y_{ikuv}$, plus optionally $|V_P||E_G|$ binary variables $f_{iuv}$. The number of constraints is polynomial in $|V_P|$, $|E_P|$, $|V_G|$, and $|E_G|$, with connectivity enforced iteratively through the feasibility cuts \eqref{eq:decomp-connectivity-cut}.

\paragraph{Comparison to Chain-enumeration formulation}
We compare the two formulations on instances that are not eliminated by the zero-phase infeasibility screening. The results are reported in Table~\ref{table:minor-embedding-comparison}.

\begin{table}[htbp]
\small
\begin{center}
\begin{tabular}{ccccc}
\toprule
Formulation & Solved  & UNSAT  & Gap$\downarrow$ & Time (s)\\
\midrule
Decomposition       & 65.41\%      & 6.20\%   & 15.15\% & 42.95\\
Chain-enumeration   & 96.40\%      & 100.00\%   & 5.14\% & 17.88\\
\bottomrule
\end{tabular}
\end{center}
\caption{Comparison of the two formulations on 2000 instances with a 60-second Gurobi time limit per instance. The table reports results on the 1100 instances that were solved by at least one method or certified infeasible. \emph{Solved} is the fraction of instances solved within the time limit, \emph{UNSAT} is the fraction of infeasible instances certified infeasible, \emph{Gap} is the mean final relative optimality gap at termination (returned by Gurobi), computed as $(\text{upper bound}-\text{lower bound})/\lvert \text{upper bound}\rvert$, and \emph{Time (s)} is the mean total runtime.}
\label{table:minor-embedding-comparison}
\end{table}
Overall, the chain-enumeration formulation is more effective on this test set. It solves more instances within the time limit, detects infeasibility more reliably, and yields smaller final gaps. It also requires less average total runtime. These results support its use as the main formulation for dataset generation.

\section{Formulation Details for k-coloring}
\label{appendix:k-coloring}
\subsection{k-coloring}
Given an undirected graph $G=(V,E)$ and a fixed integer $k\ge 1$, the $k$-coloring feasibility problem asks whether the vertices of $G$ can be assigned at most $k$ colors so that adjacent vertices receive different colors.

\paragraph{Decision variables.}
For each vertex $i\in V$ and color $c\in K$, let $x_{ic}\in\{0,1\}$ indicate whether vertex $i$ is assigned color $c$.

\paragraph{Formulation.}
The $k$-coloring feasibility problem can be written as
\begin{align}
\sum_{c\in K} x_{ic} &= 1
&& \forall i\in V, \label{eq:kcol-assign}\\
x_{ic} + x_{jc} &\le 1
&& \forall \{i,j\}\in E,\ \forall c\in K, \label{eq:kcol-edge}
\end{align}
Constraints \eqref{eq:kcol-assign} ensure that each vertex receives exactly one color, while constraints \eqref{eq:kcol-edge} enforce that adjacent vertices cannot be assigned the same color. Thus, the graph is $k$-colorable if and only if the above system is feasible.

Note that the 2-coloring problem is equivalent to determining whether a graph is bipartite, which is solvable in linear time. So in this paper we consider the 3-coloring problem.
\subsection{min-coloring}
Given an undirected graph $G=(V,E)$, the min-coloring problem asks for a proper coloring of the vertices using as few colors as possible. Equivalently, the goal is to determine the chromatic number of $G$.

\paragraph{Candidate colors.}
Let $K:=\{1,2,\dots,|V|\}$ be an upper-bounding set of candidate colors. Since every graph on $|V|$ vertices can be colored with at most $|V|$ colors, this choice is always valid.

\paragraph{Decision variables.}
For each vertex $i\in V$ and color $c\in K$, let $x_{ic}\in\{0,1\}$ indicate whether vertex $i$ is assigned color $c$. In addition, for each color $c\in K$, let $y_c\in\{0,1\}$ indicate whether color $c$ is used by at least one vertex.

\paragraph{Formulation.}
The min-coloring problem can be written as
\begin{align}
\min \sum_{c\in K} y_c \label{eq:mincol-obj}
\end{align}
subject to
\begin{align}
\sum_{c\in K} x_{ic} &= 1
&& \forall i\in V, \label{eq:mincol-assign}\\
x_{ic} + x_{jc} &\le 1
&& \forall \{i,j\}\in E,\ \forall c\in K, \label{eq:mincol-edge}\\
x_{ic} &\le y_c
&& \forall i\in V,\ \forall c\in K, \label{eq:mincol-link}
\end{align}
Constraints \eqref{eq:mincol-assign} assign exactly one color to each vertex, constraints \eqref{eq:mincol-edge} ensure that adjacent vertices receive different colors, and constraints \eqref{eq:mincol-link} activate $y_c$ whenever color $c$ is assigned to some vertex. The objective \eqref{eq:mincol-obj} therefore minimizes the number of colors used.

\paragraph{Symmetry breaking.}
Because colors are interchangeable, the formulation contains substantial symmetry. A common strengthening is to impose
\begin{align}
y_c \ge y_{c+1}
\qquad \forall c=1,\dots,|V|-1, \label{eq:mincol-sym}
\end{align}
which enforces that lower-indexed colors are used before higher-indexed ones.

\subsection{Difficulty of infeasibility vs. optimality}
\label{appendix:min-coloring}
\begin{table}[htbp]
\small
\begin{center}
\begin{tabular}{cccc}
\toprule
Model              & Feasibility$\uparrow$ & Optimality Gap$\downarrow$ & Error Ratio$\downarrow$\\
\midrule
GPT5.2             & 22.2\%      & 70.58\%   & 17.04\%\\
Llama-8b SFT BestOf1 & 10.1\%      & 16.64\%   & 3.90\%        \\
Llama-8b SFT BestOf8 & 24.1\%      & 10.04\%   & 2.06\% \\
\bottomrule
\end{tabular}
\end{center}
\caption{Evaluation of feasibility and optimality gap (min-coloring). Bold indicates optimal performance per metric. Error Ratio is defined as the percentage of violated edges among all edges.}
\label{table:min-coloring}
\end{table}

Table~\ref{table:min-coloring} presents the results for the min-coloring problem. Unlike the 3-coloring task, this problem does not include ground-truth infeasible instances. Although the min-coloring problem shares constraints with 3-coloring and involves a larger decision space (e.g., determining the optimal number of colors), the model performs significantly better at generating valid colorings. (Compare to SAT instances feasibility rate in table~\ref{table:3-coloring-FJ}). While techniques like reinforcement learning, as proposed by~\citet{jiang2025large}, could further enhance the fine-tuned model's performance, we leave such enhancement for future work, as this study mainly focus on infeasibility detection.

\section{Proofs}
\label{appendix:proofs}
\subsection{Proof of Theorem \ref{thm:meec-feas-npc}}
\begin{proof}
Membership in NP is immediate. Given a binary assignment vector $x$, one can verify in polynomial time that each problem vertex is assigned exactly one chain, that the selected chains are pairwise disjoint, and that for every problem edge the corresponding selected chains are adjacent in the hardware graph.

To prove NP-hardness, we reduce from \emph{3-Dimensional Matching (3DM)}.
An instance of \emph{3DM} consists of three pairwise disjoint sets
$X,Y,Z$ with $|X|=|Y|=|Z|=n,$ and a set of triples $T\subseteq X\times Y\times Z.$
The question is whether there exists a subset $M\subseteq T$ with $|M|=n$ such that no two triples in $M$ share an element.

We now construct an instance of \emph{MEEC-Feas} from $(X,Y,Z,T)$.
\paragraph{Hardware graph.}
Given an instance $(X,Y,Z,T)$ of \emph{3DM}, we construct a hardware graph $G=(V_G,E_G)$ with three types of vertices.

First, for each part $B\in\{X,Y,Z\}$, create four special vertices
$$
a_B,\; b_B^1,\; b_B^2,\; b_B^3,
$$
and add all six edges among them. Thus, for each $B \in \{X,Y,Z\}$, these four vertices induce a complete graph on four vertices, denoted by $K_4$.

Second, create a corresponding hardware vertex for each element in $X \cup Y \cup Z$. Add edges from the $a$-vertices to these element-vertices: $\{a_X,x\}$ for all $x \in X$, $\{a_Y,y\}$ for all $y \in Y$, and $\{a_Z,z\}$ for all $z \in Z$.

Finally, for each triple $t=(x,y,z) \in T$, create one additional hardware vertex $p_t$, and connect it to the three element-vertices appearing in that triple: $\{p_t,x\}, \{p_t,y\}$, and $\{p_t,z\}$. There are no other hardware edges.

\paragraph{Admissible chain family.}
Let
$$
\mathcal C
:=
\bigl\{\{v\}:v\in V_G\bigr\}
\;\cup\;
\bigl\{C_t : t\in T\bigr\},
$$
where for each triple $t=(x,y,z) \in T$, we define $C_t := \{p_t,x,y,z\}$. Each $C_t$ is connected in $G$ (inducing a star centered at $p_t$), and by construction, every singleton chain $\{v\}$ is included in $\mathcal{C}$.

\paragraph{Problem graph.}
Construct a problem graph $P=(V_P,E_P)$ as follows.

For each part $B \in \{X,Y,Z\}$, create four vertices $u_B^0, u_B^1, u_B^2, u_B^3$ and add all six edges among them, ensuring that each set of four induces a $K_4$ in $P$. In addition, create a set of $n$ selector vertices, denoted as $S = \{s_r \mid 1 \le r \le n\}$. For every $s_r \in S$ and each part $B \in \{X,Y,Z\}$, add the edge $\{s_r,u_B^0\}$. There are no other edges in $P$.

We now show that the constructed \emph{MEEC-Feas} instance is feasible if and only if the original \emph{3DM} instance has a matching of size $n$. We rely on two key observations: First, each $K_4$ in the problem graph must be mapped to one of the three $K_4$ formed by singleton chains in the hardware graph. Second, once established, each selector vertex is forced to choose a chain of the form $C_t$.

\paragraph{Claim 1.}
The only collections of four pairwise disjoint admissible chains that are pairwise adjacent are 
$$\bigl\{\{a_B\},\{b_B^1\},\{b_B^2\},\{b_B^3\}\bigr\}$$ 
for $B \in \{X,Y,Z\}$.

\noindent
\emph{Proof of Claim 1.}
If four pairwise disjoint singleton chains are pairwise adjacent, their underlying hardware vertices must form a $K_4$ in $G$. By construction, the only $K_4$ subgraphs of $G$ are the three gadgets corresponding to $X, Y$, and $Z$.

It remains to rule out the possibility that one of the four chains is a nonsingleton chain $C_t$. Suppose $C_t=\{p_t,x,y,z\}$ is selected. Any admissible chain disjoint from $C_t$ and adjacent to $C_t$ must contain a vertex adjacent in $G$ to one of $p_t,x,y,z$.

Because $p_t$ connects exclusively to $x, y,$ and $z$, any singleton chain $\{p_t\}$ intersects $C_t$ and is not disjoint. Furthermore, singletons $\{x\}, \{y\},$ and $\{z\}$ also intersect $C_t$. The only disjoint singleton chains adjacent to $C_t$ are therefore $\{a_X\},\{a_Y\},\{a_Z\}$ (via the edges $\{a_X,x\}, \{a_Y,y\}, \{a_Z,z\}$), along with any singleton $\{p_{t'}\}$ where $t'$ shares an element with $t$. However, the vertices $a_X, a_Y,$ and $a_Z$ are pairwise non-adjacent in $G$, and all $p$-vertices are pairwise non-adjacent to each other. Consequently, it is impossible to find three mutually adjacent chains among them. Hence, $C_t$ cannot belong to any family of four pairwise disjoint, pairwise adjacent chains.
\hfill$\diamond$

Since each set $\{u_B^0,u_B^1,u_B^2,u_B^3\}$ induces a $K_4$ in $P$, Claim 1 implies that in any feasible solution these four problem vertices must be assigned to one of the three hardware $K_4$ gadgets. Because these gadgets are pairwise disjoint and chain disjointness is enforced globally, the three $K_4$ subgraphs of $P$ must occupy the three hardware gadgets bijectively.

\paragraph{Claim 2.}
Within each hardware $K_4$ gadget, the vertex $u_B^0$ must be assigned to the singleton chain $\{a_{B'}\}$ of that gadget (for some $B'\in\{X,Y,Z\}$), and not to one of the chains $\{b_{B'}^1\},\{b_{B'}^2\},\{b_{B'}^3\}$.

\noindent
\emph{Proof of Claim 2.}
The vertex $u_B^0$ is adjacent in $P$ not only to the other three vertices of its $K_4$ but also to all selector vertices in $S$. If $u_B^0$ were assigned to, say, $\{b_{B'}^1\}$, then every selector $s_r$ would have to be assigned to a chain adjacent to $\{b_{B'}^1\}$. But in $G$, the only vertices adjacent to $b_{B'}^1$ are the other three vertices of the same hardware $K_4$ gadget, namely $a_{B'},b_{B'}^2,b_{B'}^3$. Those singleton chains are already occupied by the four vertices of that problem $K_4$, and no nonsingleton chain in $\mathcal C$ contains any of these gadget vertices. Hence no selector vertices can be assigned a chain adjacent to $\{b_{B'}^1\}$, contradiction. The same argument applies to $\{b_{B'}^2\}$ and $\{b_{B'}^3\}$. Therefore, $u_B^0$ must be assigned to $\{a_{B'}\}$.
\hfill$\diamond$

Up to a permutation of the labels $X, Y,$ and $Z$, it follows that the three distinguished vertices $u_X^0, u_Y^0, u_Z^0$ are strictly assigned to $\{a_X\}, \{a_Y\}$, and $\{a_Z\}$.

Now consider any selector vertex $s_r$. Since $s_r$ is adjacent in $P$ to all three $u^0$ vertices, its assigned chain must be adjacent to all three singleton chains $\{a_X\},\{a_Y\},\{a_Z\}$. We evaluate the admissible chains to find which possess this property:
\vspace{-1em}
\begin{enumerate}[label=-]
    \item A singleton chain $\{b_B^j\}$ is adjacent only to the other three vertices in its own $K_4$, so it cannot be adjacent to all three of $\{a_X\}, \{a_Y\}, \{a_Z\}$.
    \vspace{-0.5em}
    \item A singleton chain $\{a_B\}$ is already used by one of the vertices $u_X^0,u_Y^0,u_Z^0$, and is therefore unavailable by disjointness.
    \vspace{-0.5em}
    \item A singleton chain $\{x\}$ with $x\in X$ is adjacent only to $\{a_X\}$ among these three singleton chains. Similarly, $\{y\}$ with $y\in Y$ is adjacent only to $\{a_Y\}$, and $\{z\}$ with $z\in Z$ is adjacent only to $\{a_Z\}$.
    \vspace{-0.5em}
    \item A singleton chain $\{p_t\}$ is adjacent only to the three element-vertices of the triple $t$, and so it is adjacent to none of $\{a_X\}, \{a_Y\}, \{a_Z\}$.
    \vspace{-0.5em}
    \item For a triple $t=(x,y,z)$, the chain $C_t=\{p_t,x,y,z\}$ is adjacent to all three singleton chains $\{a_X\}, \{a_Y\}, \{a_Z\}$ through the edges $\{a_X,x\}$, $\{a_Y,y\}$, and $\{a_Z,z\}$.
\end{enumerate}
\vspace{-1em}
Therefore, every selector vertex in $S$ must be assigned to a chain of the form $C_t$.

Since each $p_t$ is unique to its triple, two chains $C_t$ and $C_{t'}$ intersect if and only if the corresponding triples $t$ and $t'$ share an element of $X\cup Y\cup Z$. Hence, choosing $n$ pairwise disjoint selector chains is equivalent to choosing $n$ triples from $T$ that share no element, i.e., to a 3-dimensional matching of size $n$.

\paragraph{\emph{MEEC-Feas}$\Rightarrow$\emph{3DM}.}

If the constructed \emph{MEEC-Feas} instance is feasible, then the $n$
selector vertices determine $n$ pairwise disjoint chains $C_t$, and hence
$n$ triples in $T$ that are pairwise disjoint. Therefore the original
\emph{3DM} instance has a matching of size $n$.

\paragraph{\emph{MEEC-Feas}$\Leftarrow$\emph{3DM}.}
Conversely, suppose the \emph{3DM} instance has a matching $M = \{t_r \mid 1 \le r \le n\} \subseteq T$ of size $n$. Assign each of the three $K_4$ subgraphs of $P$ to one of the three hardware $K_4$ gadgets (e.g., $u_B^0 \mapsto \{a_B\}$ and $u_B^j \mapsto \{b_B^j\}$). Then, assign $s_r \mapsto C_{t_r}$ for all $1 \le r \le n$. Because the matching triples are pairwise disjoint, the selected chains $\{C_{t_r} \mid 1 \le r \le n\}$ are pairwise disjoint. Furthermore, each $C_{t_r}$ connects to the three singleton chains $\{a_X\}, \{a_Y\},$ and $\{a_Z\}$, thereby realizing all problem edges $\{s_r, u_B^0\}$ for $B \in \{X, Y, Z\}$. Thus all constraints \eqref{eq:feas-assign-main}--\eqref{eq:feas-adj-main} are satisfied.

The construction is polynomial in the size of the \emph{3DM} instance. Moreover, all admissible nonsingleton chains introduced by the reduction have cardinality $4$. Therefore, NP-hardness already holds in this restricted special case, and
therefore \emph{MEEC-Feas} is NP-complete.
\end{proof}

\subsection{Proof of the zero-phase infeasibility certificate}
For each problem vertex $i\in V_P$, let $C_i\subseteq V_H$ denote its chain, and let $s_i:=|C_i|$. Since $C_i$ is connected, the subgraph induced by $C_i$ contains at least $s_i-1$ internal hardware edges.
\begin{proof}
Fix any problem vertex $i\in V_P$ and let $s_i:=|C_i|$. Since each hardware vertex has degree at most $\Delta_H$, the total degree contributed by the vertices of $C_i$ is at most $s_i\Delta_H$. Because $C_i$ is connected, it contains at least $s_i-1$ internal hardware edges, and each such edge consumes two units of degree. Therefore, the number of hardware edges leaving $C_i$ is at most
$$
s_i\Delta_H-2(s_i-1)=s_i(\Delta_H-2)+2.
\label{eq:boundary-count-app}
$$
Each neighbor of $i$ in the problem graph must be realized by at least one hardware edge leaving $C_i$, since chains assigned to distinct problem vertices are disjoint and every problem edge must be witnessed by a hardware edge between the corresponding chains. Hence
$$
\deg_P(i)\le s_i(\Delta_H-2)+2 \le L(\Delta_H-2)+2,
$$
which proves \eqref{eq:zp-degree-bound-main}. Rearranging this inequality gives the lower bound on the chain size required for vertex $i$.

Because the chains are pairwise disjoint, any embedding uses at least $s_i^{\min}$ hardware vertices for each $i\in V_P$. Summing over all problem vertices yields
$$
\sum_{i\in V_P} s_i^{\min}\le |V_H|,
$$
which proves \eqref{eq:zp-node-bound-main}.

For \eqref{eq:zp-edge-bound-main}, note that each connected chain $C_i$ with $|C_i|=s_i$ contains at least $s_i-1$ internal hardware edges. In addition, each problem edge in $E_P$ requires at least one hardware edge between two distinct chains. Internal chain edges lie entirely within a single chain, whereas witness edges for problem edges connect two distinct chains; therefore these two classes of hardware edges are disjoint. It follows that any embedding must use at least
$$
|E_P|+\sum_{i\in V_P}(s_i-1)
$$
hardware edges, and hence at least
$$
|E_P|+\sum_{i\in V_P}(s_i^{\min}-1).
$$
Therefore,
$$
|E_P|+\sum_{i\in V_P}(s_i^{\min}-1)\le |E_H|,
$$
which proves \eqref{eq:zp-edge-bound-main}.
\end{proof}

\section{Details of feasibility jump}
\label{appendix:fj}
For completeness, we summarize the Feasibility Jump (FJ) repair procedure used in our framework. Let
$N := \{1,\dots,n\},
\quad
M := \{1,\dots,m\},$
denote the index sets of variables and constraints, respectively. We consider the linear feasibility problem
$$
\text{find } x \in \mathcal{X}
\quad \text{s.t.} \quad
a_i^\top x \le b_i \qquad \forall i \in M,
$$
where $\mathcal{X}$ encodes bounds and integrality restrictions. FJ works with the weighted infeasibility objective
$$
F_w(x)=\sum_{i\in M} w_i f_i(x),
\qquad
f_i(x)=\max\{0,\, a_i^\top x-b_i\},
$$
where $w_i \ge 0$ is the current weight of constraint $i$. Given a current assignment $\bar x$, FJ evaluates for each variable $x_j$ the best one-variable move with all other variables fixed. Let
$
G_j(t \mid \bar x_{k\neq j})
$
denote the weighted infeasibility obtained by setting $x_j=t$ while keeping all other variables fixed. The \emph{jump value} of variable $j$ is
$$
v_j
:=
\mathrm{Jump}_j(\bar x_{k\neq j})
=
 \mathrm{argmin}_{t \in [l_j,u_j]\cap D_j,\ t\neq \bar x_j}
G_j(t \mid \bar x_{k\neq j}),
$$
where $D_j=\mathbb{Z}$ for integer variables and $D_j=\mathbb{R}$ otherwise. Its score is
$$
s_j
=
G_j(\bar x_j \mid \bar x_{k\neq j})
-
G_j(v_j \mid \bar x_{k\neq j}),
$$
which measures the reduction in weighted infeasibility obtained by moving $x_j$ to its jump value. The set of promising variables is
$
P := \{j \in N : s_j > 0\}.
$
If $P=\varnothing$ and the current solution is still infeasible, FJ regards the current point as a local minimum. It then increases the weights of violated constraints, updates the affected scores, and continues the search from the reweighted objective.

The pseudocode is given in Algorithm~\ref{alg:feasibility_jump}.
\begin{algorithm}[t]
\caption{Feasibility Jump Repair from an LLM Candidate}
\label{alg:feasibility_jump}
\DontPrintSemicolon
\SetKwInOut{Input}{Input}
\SetKwInOut{Output}{Output}

\Input{LLM candidate $\hat{x}$, iteration limit $T$}
\Output{A feasible solution $x\in\mathcal X$ if found; otherwise failure}

Parse $\hat{x}$ into an initial assignment $x\in\mathcal X$\;
Initialize $w_i \gets 1$ for all $i\in M$\;
Compute $v_j$ and $s_j$ for all $j\in N$\;
$P \gets \{j\in N : s_j>0\}$\;
$x^{\mathrm{best}} \gets x$\;

\For{$t \gets 1$ \KwTo $T$}{
    \If{$F_w(x)=0$}{
        \Return{$x$}\;
    }

    \eIf{$P\neq \varnothing$}{
        Choose a subset $P^\star \subseteq P$ with at most $25$ variables\;
        Select
        $
        j^\star \in \arg\max_{j\in P^\star} s_j
        $
    }{
        $U \gets \{i\in M : f_i(x)>0\}$\;
        \ForEach{$i\in U$}{
            $w_i \gets w_i + 1$\;
        }
        Recompute $v_j$ and $s_j$ for all $j\in N$ with $a_{ij}\neq 0$ for some $i\in U$\;
        $P \gets \{j\in N : s_j>0\}$\;
        Choose $i^\star$ uniformly at random from $U$\;
        Select
        $
        j^\star \in \arg\max_{j\in N:\, a_{i^\star j}\neq 0} s_j
        $
    }

    $x_{j^\star} \gets v_{j^\star}$\;

    \If{$F_w(x) < F_w(x^{\mathrm{best}})$}{
        $x^{\mathrm{best}} \gets x$\;
    }

    Recompute $v_{j^\star}$ and $s_{j^\star}$\;
    Recompute $v_j$ and $s_j$ for all $j\neq j^\star$ sharing a constraint with $j^\star$\;
    $P \gets \{j\in N : s_j>0\}$\;
}

\If{$F_w(x^{\mathrm{best}})=0$}{
    \Return{$x^{\mathrm{best}}$}\;
}
\Return{failure}\;
\end{algorithm}

\paragraph{Phase two of feasibility jump.}
After a feasible solution has been found, we run a second local-improvement phase that minimizes the number of used hardware vertices while preserving feasibility. Let $\phi(x)$ denote the total number of hardware vertices used by the current embedding. Phase 2 considers only deletion moves: for each chain of size at least two, it proposes removing a small number of hardware vertices from that chain. A candidate move is accepted only if the updated solution remains feasible and strictly decreases $\phi(x)$. In our implementation, at most 60 candidate deletions are generated per iteration, at most 20 are probed, and the search terminates after a fixed number of consecutive non-improving iterations. The best feasible solution found during this phase is returned. The pseudocode is given in Algorithm~\ref{alg:fj-phase2}. In the experiments reported in Table~\ref{table:embedding-FJ}, Phase 2 is executed as long as the total runtime remains within the 30-second limit, with the patience parameter set to 40.

\begin{algorithm}[t]
\caption{Phase 2 of Feasibility Jump for minor-embedding}
\label{alg:fj-phase2}
\DontPrintSemicolon
\SetKwInOut{Input}{Input}
\SetKwInOut{Output}{Output}

\Input{A feasible assignment $x$, patience threshold $P_{\max}$}
\Output{A feasible assignment with improved objective, if found}

$\phi(x) \gets$ total number of used hardware vertices\;
$x^{\mathrm{best}} \gets x$\;
$\phi^{\mathrm{best}} \gets \phi(x)$\;
$p \gets 0$\;

\While{$p < P_{\max}$}{
    Construct candidate deletion moves $\mathcal{M}$ as follows:\;
    \Indp
    For each problem vertex, if its chain has size at least two, sample up to four hardware vertices from that chain and create moves that unassign those vertices\;
    Stop once $|\mathcal{M}|=60$\;
    \Indm

    \If{$\mathcal{M}=\varnothing$}{
        \Return{$x^{\mathrm{best}}$}\;
    }

    Sample a probe set $\mathcal{M}^\star \subseteq \mathcal{M}$ with $|\mathcal{M}^\star| \le 20$\;
    improved $\gets$ \textbf{false}\;

    \ForEach{$m \in \mathcal{M}^\star$}{
        Apply $m$ to obtain trial solution $x'$ \;
        Re-evaluate all feasibility constraints for $x'$\;
        \If{$x'$ is feasible \textbf{and} $\phi(x') < \phi(x)$}{
            $x \gets x'$\;
            \If{$\phi(x) < \phi^{\mathrm{best}}$}{
                $x^{\mathrm{best}} \gets x$\;
                $\phi^{\mathrm{best}} \gets \phi(x)$\;
            }
            $p \gets 0$\;
            improved $\gets$ \textbf{true}\;
            \textbf{break}\;
        }
    }

    \If{improved = \textbf{false}}{
        $p \gets p + 1$\;
    }
}

\Return{$x^{\mathrm{best}}$}\;
\end{algorithm}

\section{Experiment setting}
\label{appendix:experiment setting}
This section provides details of dataset generation and training hyperparameters. All our codes and data are available at \href{https://drive.google.com/drive/folders/1P7ixNFALdlZHyGm3vc8eGNRy8681W2jY?usp=drive_link}{LLMOptimizer}.
\subsection{Dataset generation}

For graph generation, we use several standard graph families chosen to span different structural regimes. Specifically, we use \emph{Erd\H{o}s--R\'enyi} graphs~\citep{erdds1959random}, \emph{Barab\'asi--Albert} graphs~\citep{barabasi1999emergence}, \emph{Watts--Strogatz} graphs~\citep{watts1998collective}, random regular graphs, Delaunay graphs, and stochastic block models. We generate \emph{Erd\H{o}s--R\'enyi}, \emph{Barab\'asi--Albert}, \emph{Watts--Strogatz}, and random regular graphs using standard NetworkX implementations~\citep{hagberg2007exploring}.

\paragraph{Graph coloring.}
For graph coloring, we generate in-distribution instances from \emph{Erd\H{o}s--R\'enyi} graphs. Specifically, we generate $G(n,p)$ instances with $n \in [10,300]$ and set $p=d/(n-1)$, where the target average degree satisfies $d \in [3.0,5.2]$. After generation, isolated vertices are removed. To obtain ground-truth labels for 3-coloring, we solve each instance with Google OR-Tools CP-SAT and label it as SAT or UNSAT; for SAT instances, we also extract an explicit coloring. For min-coloring, we solve the optimization problem with Gurobi under a 60-second time limit.

For OOD evaluation, we generate instances from two held-out graph families: \emph{Barab\'asi--Albert} and \emph{Watts--Strogatz}. We again use $n \in [10,300]$ and target average degree $d \in [3.0,5.2]$. To ensure sufficient UNSAT coverage, we implant a $K_4$ clique into a subset of instances. For \emph{Barab\'asi--Albert} graphs, we set $m=\mathrm{round}(\min(d/2,8))$ with $m \ge 1$. For \emph{Watts--Strogatz} graphs, we set $k=\mathrm{round}(d)$ rounded to the nearest even integer with $k \ge 4$, and sample the rewiring probability as $\beta \sim \mathrm{Uniform}[0.2,0.7]$.

\paragraph{Minor-embedding.}
For minor embedding, problem graphs are sampled from four families: \emph{Erd\H{o}s--R\'enyi}, \emph{Barab\'asi--Albert}, \emph{Watts--Strogatz}, and random regular. Specifically, \emph{Erd\H{o}s--R\'enyi} source graphs use edge probability $p \in [0.1,0.6]$; \emph{Barab\'asi--Albert} graphs use attachment parameter $m \in [3,\min(8,n-1)]$; \emph{Watts--Strogatz} graphs use $k \in \{2,4,6,8,10\}$ with $k$ clamped to $n-1$ and forced even, and rewiring probability $p \in [0.05,0.3]$; and random regular graphs use degree $d \in \{4,6,8,10\}$ with parity correction when needed.

Hardware graphs are drawn from a 50\%/50\% mixture of Chimera and random topologies. For Chimera, we fix $t=4$ and vary $(m,n) \in \{1,2,3,4\}^2$, yielding 16 hardware shapes. For random hardware, we sample one family from \emph{Erd\H{o}s--R\'enyi}, random regular, and \emph{Watts--Strogatz}. \emph{Erd\H{o}s--R\'enyi} hardware uses $n \in [20,100]$ and $p \in [0.4,0.7]$, resampling until connected. Random regular hardware uses $n \in [20,100]$ and degree $d \in \{6,8,10,12\}$ with parity correction. \emph{Watts--Strogatz} hardware uses $n \in [20,100]$, $k \in \{6,8,10,12\}$ with $k<n$ and even parity, and rewiring probability $p \in [0.05,0.2]$. For each problem--hardware pair, we sample the source size $|V_P|$ from $[\max(6,\lfloor 0.2|V_H| \rfloor), |V_H|]$, where $|V_H|$ is the size of the hardware graph. We reject and resample any pair violating the trivial lower bounds $|V_H| < |V_P|$ or $|E_H| < |E_P|$.

Since many minor-embedding instances cannot be solved to optimality within a practical time budget, we label them using both exact and heuristic methods. We run Gurobi on the chain-enumeration formulation with maximum chain length $L=3$ and a 60-second time limit. In parallel, we run \emph{minorminer} as a heuristic baseline up to 10 times per instance, or until 60 seconds total, again enforcing chain length at most $L=3$. This gives a dataset in which exact optimization provides infeasibility certificates and quality guarantees when possible, while \emph{minorminer} supplies additional feasible solutions on instances that are harder to solve exactly. Results of this comparison are reported in Table~\ref{table:minor_embedding_compare}.

For OOD evaluation, we introduce both a hardware-topology shift and a problem-graph shift. On the hardware side, we use Pegasus graphs with $m \in \{2,3\}$, which differ structurally from the Chimera and random hardware graphs seen in training. On the problem graph side, we generate graphs from a stochastic block model (SBM). Specifically, we sample the number of source vertices uniformly from $[0.20|V_H|, |V_H|]$, partition them into $B \in [2,5]$ blocks with at least three nodes per block, and sample the intra-block density as $p_{\mathrm{in}} \in [0.20,0.85]$ and the inter-block density as $p_{\mathrm{out}} \in [0.01, \min(0.35, p_{\mathrm{in}}-0.05)]$.

\begin{table}[t]
\centering
\small

\begin{tabular}{lccccc}
\toprule
Method & Inf. Cert. & Feas. & Coverage & Opt. Cert. & Avg. Gap $\downarrow$ \\
\midrule
Gurobi+Zero & 4,816 & 5,281 & 10,097 (50.5\%) & 82.0\% & 0.47\% \\
minorminer & N/A & 6,947 & 6,947 (34.7\%) & N/A & 2.21\% \\
\bottomrule
\end{tabular}
\caption{Comparison of \emph{Gurobi+Zero-Phase Infeasibility Detection} and \emph{minorminer} on 20,000 minor-embedding instances. 
\emph{Inf. Cert.} is the number of instances certified infeasible. 
\emph{Feas.} is the number of feasible solutions found. 
\emph{Coverage} counts all instances resolved by the method, i.e., infeasibility certificates plus feasible solutions. 
\emph{Opt. Cert.} reports the fraction of feasible solutions additionally proved globally optimal. 
\emph{Avg. Gap} is the average relative gap on the 4,842 instances solved by both methods, computed against the best objective found by either method.}
\label{table:minor_embedding_compare}
\end{table}

\subsection{Supervised fine tuning hyperparameters}
We detail the key hyperparameters used for training in table~\ref{table:sft_hparams}.
\begin{table}[t]
\centering
\small
\begin{tabular}{ll}
\toprule
Setting & Value \\
\midrule
Base model & Llama-3 8B \\
Framework & Unsloth + TRL SFTTrainer \\
LoRA rank $r$ & 32 \\
LoRA alpha $\alpha$ & 32 \\
LoRA target modules & $q,k,v,o$, gate, up, down proj \\
Max sequence length & 20000 \\
Optimizer & AdamW \\
Learning rate & $2\times10^{-4}$ \\
Weight decay & $10^{-3}$ \\
LR scheduler & Linear \\
Warmup steps & 20 \\
Per-device train batch size & 2 \\
Gradient accumulation & 4 \\
Effective batch size & 8 \\
Validation split & 10\% \\
Eval / save steps & 200 / 200 \\
Precision & bf16 \\
Random seed & 3407 \\
\bottomrule
\end{tabular}
\caption{Main supervised fine-tuning hyperparameters.}
\label{table:sft_hparams}
\end{table}

\section{Token usage for benchmark models}
\label{appendix: token_usage}
\begin{figure}[htbp]
    \centering

    \begin{subfigure}[b]{0.45\linewidth}
        \centering
        \includegraphics[width=\linewidth]{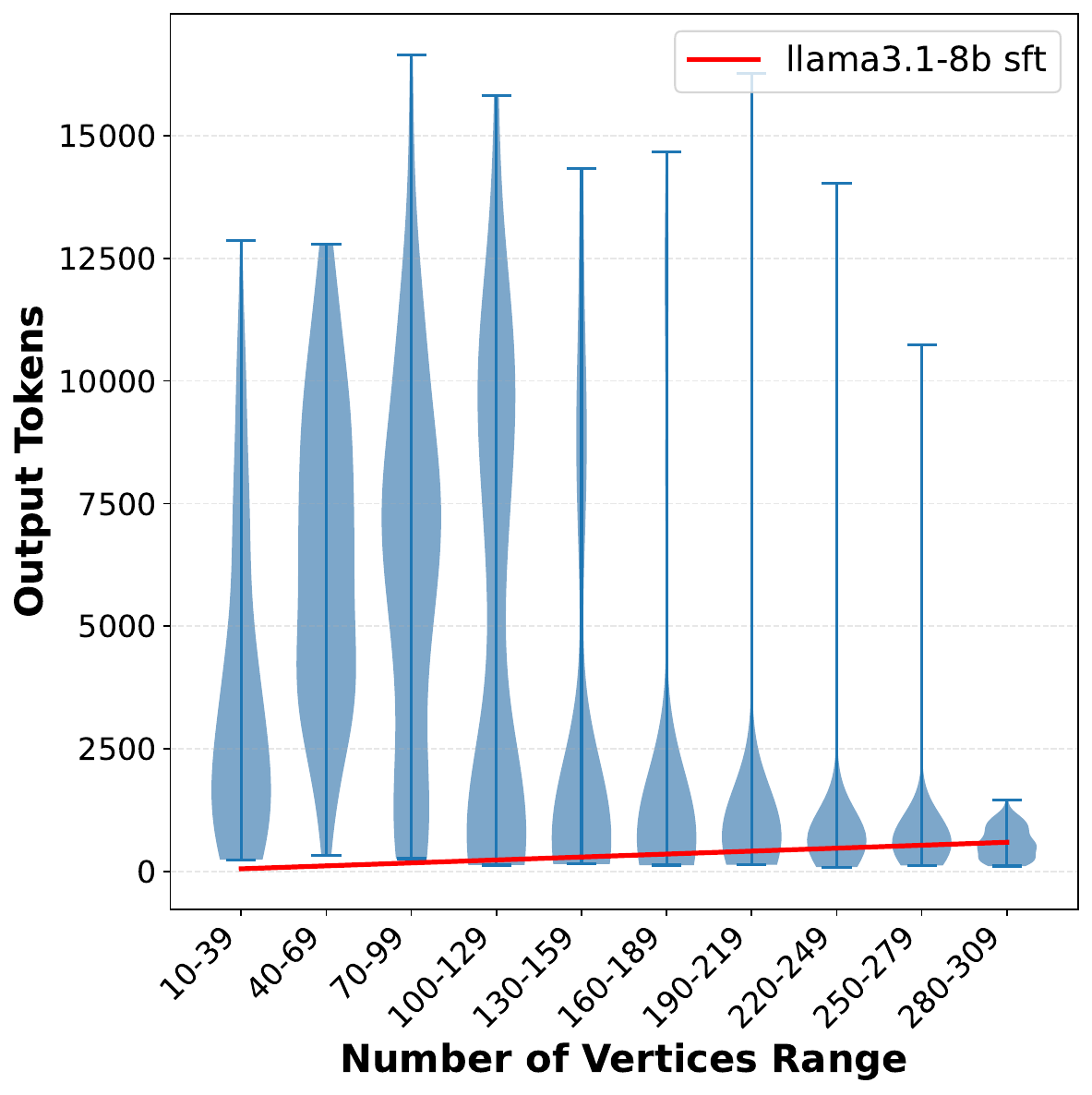}
        \caption{GPT5.2: 3-coloring}
        \label{fig:GPT-3-coloring-token}
    \end{subfigure}
    \hfill
    \hfill
    \begin{subfigure}[b]{0.45\linewidth}
        \centering
        \includegraphics[width=\linewidth]{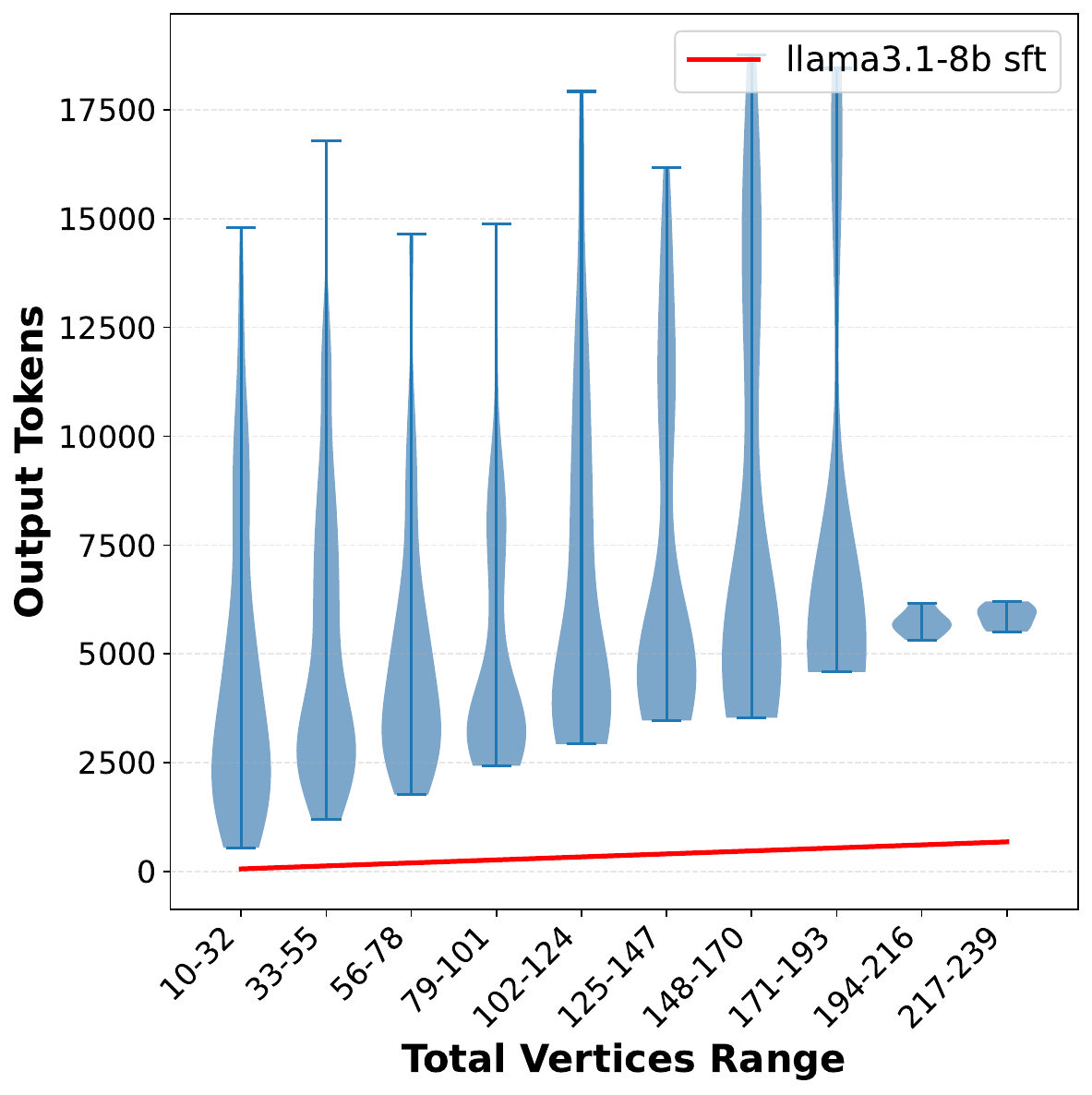}
        \caption{GPT5.2: minor-embedding}
        \label{fig:GPT-embedding-token}
    \end{subfigure}
    \vspace{1em}
    \begin{subfigure}[b]{0.45\linewidth}
        \centering
        \includegraphics[width=\linewidth]{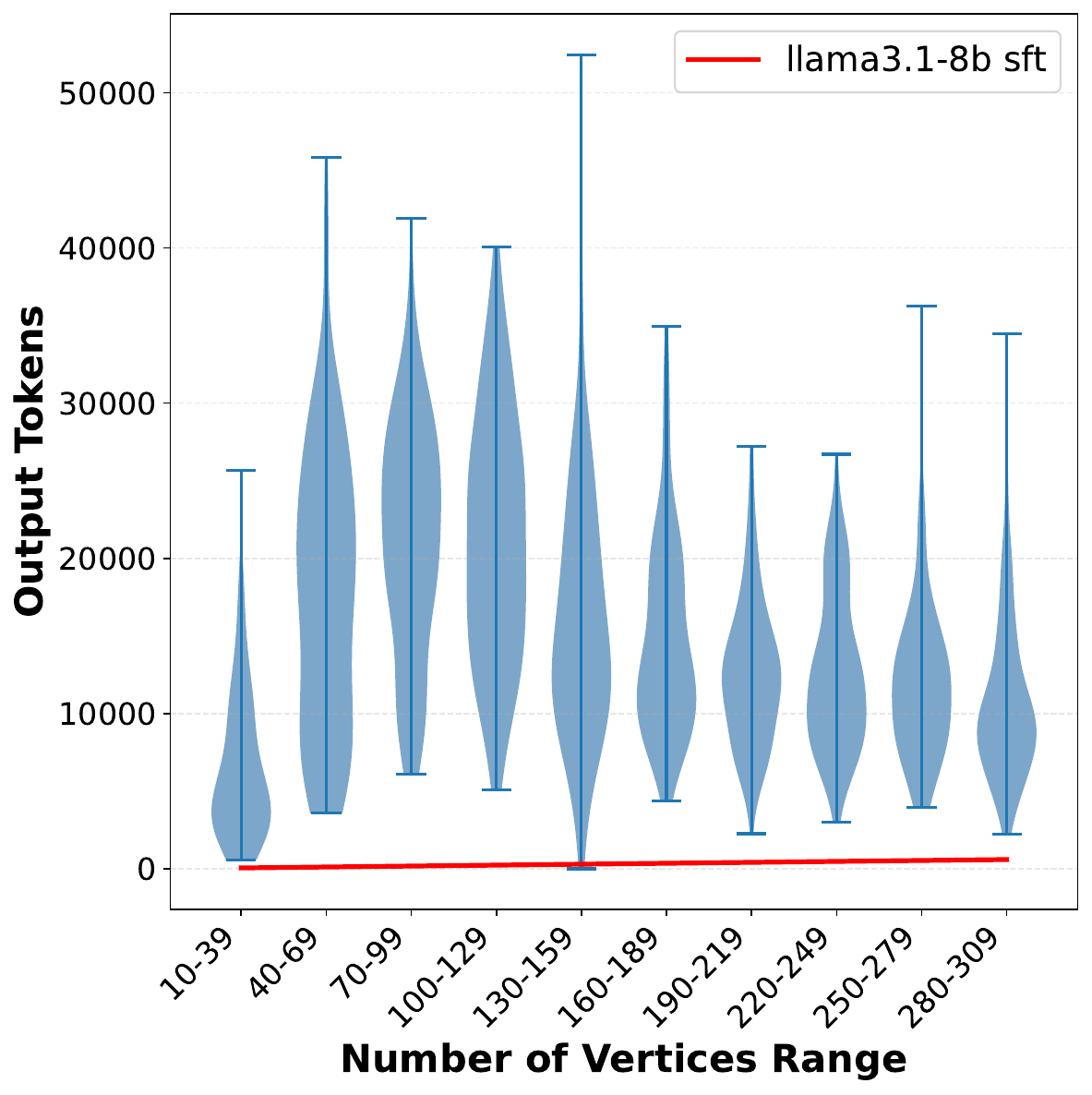}
        \caption{Grok4.1: 3-coloring}
        \label{fig:Grok-3-coloring-token}
    \end{subfigure}
    \hfill
    \begin{subfigure}[b]{0.45\linewidth}
        \centering
        \includegraphics[width=\linewidth]{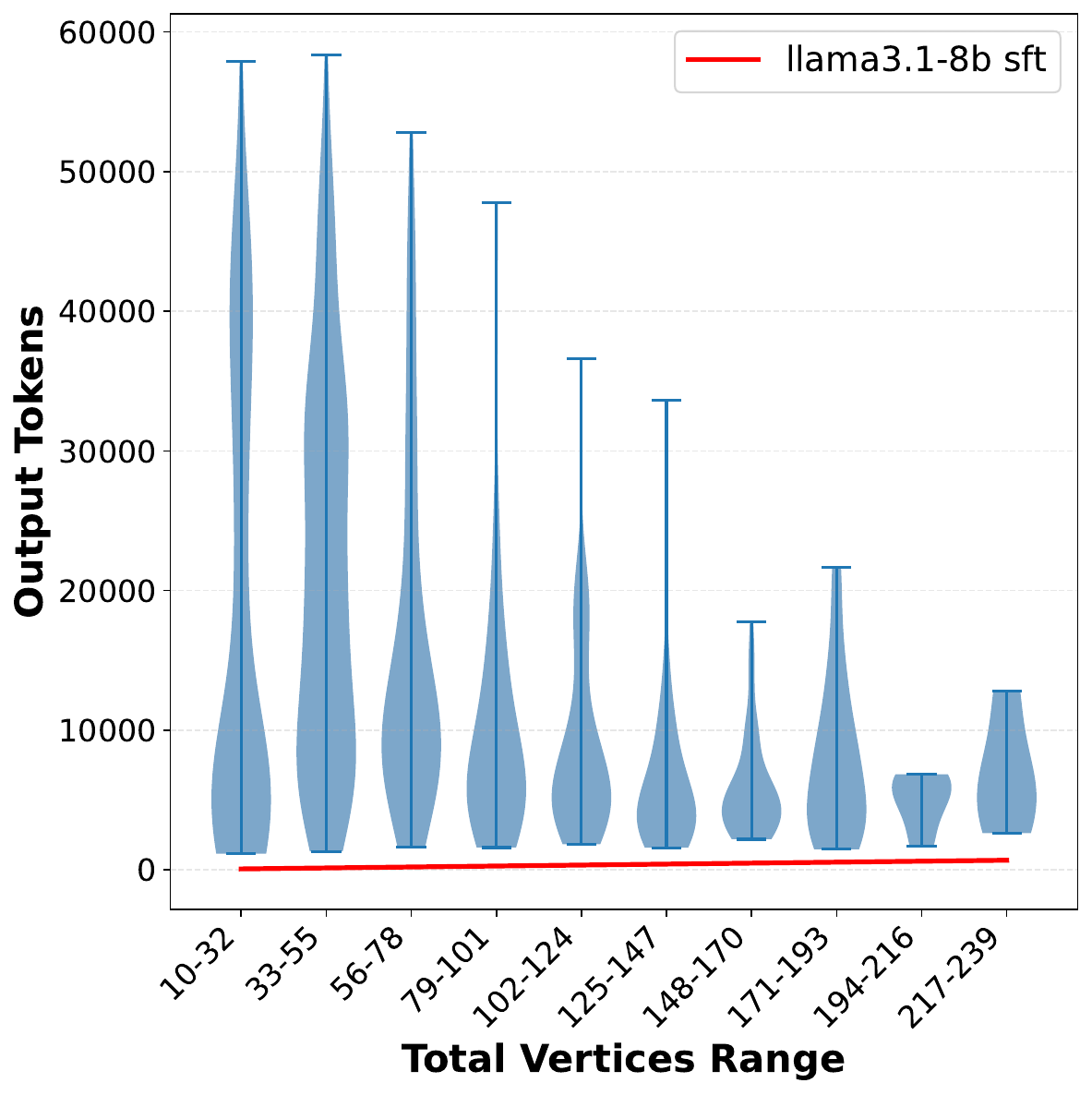}
        \caption{Grok4.1: minor-embedding}
        \label{fig:Grok-embedding-token}
    \end{subfigure}
    \caption{Token usage for GPT-5.2 and Grok-4.1 across tasks. As instance size increases, these models often default to an "infeasible" verdict, bypassing the search for valid solutions to save tokens. In contrast, our fine-tuned model’s token usage scales linearly with problem size due to its structured format.}
    \label{fig:token-usage}
\end{figure}
Figure~\ref{fig:token-usage} compares output token usage across tasks. While our fine-tuned model’s token usage grows predictably with instance size due to its structured format, GPT-5.2 and Grok-4.1 often use fewer tokens on larger instances by prematurely defaulting to an "infeasible" verdict. Although this reduces inference cost, it represents a failure mode where models bypass the constructive search required for feasible solutions. This suggests that the lower token usage for these benchmarks is a byproduct of conservative negative bias rather than genuine efficiency, which aligns with the results in Tables~\ref{tab:minor-embedding-main} and~\ref{table:3-coloring}.

\section{Training dataset example}
\label{appendix:prompt}

\begin{tcolorbox}[enhanced, colback=gray!5!white,colframe=black!75!white,title=Example Training Instance of minor-embedding]

\vspace{1mm}

\textbf{Instruction:} Given a problem graph P with 9 nodes labeled 0..8 and a hardware graph G with 24 nodes, both undirected and given by edge lists, determine whether P can be minor-embedded into G. A valid embedding maps each problem node to a connected chain of hardware nodes, chains for different problem nodes are disjoint, and every problem edge (u,v) must be realized by at least one hardware edge between the two corresponding chains. Limit the chain size up to 3 nodes. Among feasible embeddings, minimize the total number of hardware nodes used. The input also provides, for each node, up to 2 neighbors with highest degree in the form Ni:[a,b,\#c,\#d], where a,b are neighbors and \#c,\#d are their degrees. Output exactly one of the following formats: yes, embedding: \{problem\_node: [hardware\_nodes], ...\}, total nodes used: \{n\_nodes\_used\} or no.

\vspace{1mm}

\textbf{Input:} P edges: [[0,4],[0,5],[0,7],[0,8],[1,2],[1,3],[1,6],[1,8],[2,4],[2,7],[2,8],[3,5],[3,6],[3,7],
[4,6],[4,8],[5,6],[5,7]]
P top2 neighbor-degree info: N0:[4,5,\#4,\#4]; N1:[2,3,\#4,\#4]; N2:[1,4,\#4,\#4]; N3:[1,5,\#4,\#4]; N4:[0,2,\#4,\#4]; N5:[0,3,\#4,\#4]; N6:[1,3,\#4,\#4]; N7:[0,2,\#4,\#4]; N8:[0,1,\#4,\#4]

G edges: [[0,4],[0,5],[0,6],[0,7],[1,4],[1,5],[1,6],[1,7],[2,4],[2,5],[2,6],[2,7],[3,4],[3,5],[3,6],
[3,7],[4,12],[5,13],[6,14],[7,15],[8,12],[8,13],[8,14],[8,15],[9,12],[9,13],[9,14],[9,15],[10,12],
[10,13],[10,14],[10,15],[11,12],[11,13],[11,14],[11,15],[12,20],[13,21],[14,22],[15,23],[16,20],
[16,21],[16,22],[16,23],[17,20],[17,21],[17,22],[17,23],[18,20],[18,21],[18,22],[18,23],[19,20],
[19,21],[19,22],[19,23]]
G top2 neighbor-degree info: N0:[4,5,\#5,\#5]; N1:[4,5,\#5,\#5]; N2:[4,5,\#5,\#5]; N3:[4,5,\#5,\#5]; N4:[12,0,\#6,\#4]; N5:[13,0,\#6,\#4]; N6:[14,0,\#6,\#4]; N7:[15,0,\#6,\#4]; N8:[12,13,\#6,\#6]; N9:[12,13,\#6,\#6]; N10:[12,13,\#6,\#6]; N11:[12,13,\#6,\#6]; N12:[4,20,\#5,\#5]; N13:[5,21,\#5,\#5]; N14:[6,22,\#5,\#5]; N15:[7,23,\#5,\#5]; N16:[20,21,\#5,\#5]; N17:[20,21,\#5,\#5]; N18:[20,21,\#5,\#5]; N19:[20,21,\#5,\#5]; N20:[12,16,\#6,\#4]; N21:[13,16,\#6,\#4]; N22:[14,16,\#6,\#4]; N23:[15,16,\#6,\#4]:

\vspace{1mm}

\textbf{Output:} yes, embedding: {"0": [6,14], "1": [1,7,15], "2": [4], "3": [16,23], "4": [3,5], "5": [18,22], "6": [8,13,21], "7": [10,12,20], "8": [0], total nodes used: 19}

\end{tcolorbox}

\clearpage

\begin{tcolorbox}[colback=gray!5!white,colframe=black!75!white,title=Example Training Instance of 3-Coloring]

\small

\textbf{Instruction:} Given an undirected graph with 12 nodes labeled 0..11 and an edge list, decide whether the graph is 3-colorable. A valid 3-coloring assigns each node $i$ a color $c_i \in \{0,1,2\}$ such that for every edge $(u,v)$, $c_u \neq c_v$. The input also provides, for each node, up to 2 neighbors with highest degree in the form Ni:[a,b,\#c,\#d], where a,b are neighbors and \#c,\#d are their degrees. Output exactly one of: No OR Yes, coloring: [c0,c1,\dots,c(n-1)].

\vspace{1mm}

\textbf{Input:} Edges: [(0,9),(0,10),(0,11),(0,4),(1,11),(1,4),(1,7),(2,6),(2,5),(3,8),(3,9),(4,7),(4,5),(4,10),(5,9),
(5,11),(7,10),(8,9),(9,11)]

N0:[4,9,\#5,\#5]; N1:[4,11,\#5,\#4]; N2:[5,6,\#4,\#1]; N3:[9,8,\#5,\#2]; N4:[0,5,\#4,\#4]; N5:[4,9,\#5,\#5]; N6:[2,\#2]; N7:[4,1,\#5,\#3]; N8:[9,3,\#5,\#2]; N9:[0,5,\#4,\#4]; N10:[4,0,\#5,\#4]; N11:[9,0,\#5,\#4]

\vspace{1mm}

\textbf{Output:} Yes, coloring: [0, 1, 1, 2, 2, 0, 0, 0, 0, 1, 1, 2]

\end{tcolorbox}

\begin{tcolorbox}[ enhanced, colback=gray!5!white,colframe=black!75!white,title=Example Training Instance of Min Coloring]

\vspace{1mm}

\textbf{Instruction:} Given an undirected graph with 48 nodes labeled 0..47 and an edge list, find a coloring that uses the minimum possible number of colors. A valid coloring assigns each node $i$ a color $c_i$ (a nonnegative integer) such that for every edge $(u,v)$, $c_u \neq c_v$. The input also provides, for each node, up to 2 neighbors with highest degree in the form Ni:[a,b,\#c,\#d], where a,b are neighbors and \#c,\#d are their degrees. Output exactly: min\_colors: K, coloring: [c0,c1,\dots,c(n-1)].

\vspace{1mm}

\textbf{Input:} Edges: [(0,44),(0,41),(0,34),(0,43),(0,38),(1,12),(1,23),(1,6),(2,45),(2,6),(2,24),(2,3),
(2,12),(3,40),(3,24),(3,33),(3,45),(4,45),(4,13),(4,31),(4,33),(4,42),(4,14),(5,37),(5,21),(5,15),
(6,11),(6,45),(6,26),(6,12),(7,19),(7,37),(7,30),(7,22),(8,41),(8,22),(8,35),(8,44),(8,30),(9,28),
(9,45),(10,36),(10,40),(10,24),(10,16),(10,41),(11,23),(11,39),(12,29),(12,15),(13,42),(13,14),
(13,18),(13,36),(14,18),(15,25),(15,29),(15,37),(16,40),(16,18),(16,36),(17,23),(17,32),(17,29),
(17,47),(18,36),(18,31),(18,40),(18,33),(19,32),(19,29),(19,22),(20,23),(20,39),(21,37),(21,34),
(21,27),(21,38),(21,30),(22,30),(22,32),(22,47),(23,29),(23,39),(24,25),(24,41),(24,43),(24,38),
(24,40),(25,38),(26,45),(26,42),(27,34),(27,35),(27,46),(28,42),(29,32),(29,37),(32,47),(33,40),
(34,46),(35,44),(35,46),(39,42),(39,47),(41,44),(41,43),(44,46)]

N0:[41,44,\#6,\#5]; N1:[6,23,\#6,\#6]; N2:[24,6,\#8,\#6]; N3:[24,40,\#8,\#6]; N4:[45,13,\#6,\#5]; N5:[21,15,\#6,\#5]; N6:[45,2,\#6,\#5]; N7:[22,37,\#6,\#5]; N8:[22,41,\#6,\#6]; N9:[45,28,\#6,\#2]; N10:[24,40,\#8,\#6]; N11:[6,23,\#6,\#6]; N12:[29,6,\#7,\#6]; N13:[18,4,\#7,\#6]; N14:[18,4,\#7,\#6]; N15:[29,12,\#7,\#5]; N16:[18,40,\#7,\#6]; N17:[29,23,\#7,\#6]; N18:[40,13,\#6,\#5]; N19:[29,22,\#7,\#6]; N20:[23,39,\#6,\#5]; N21:[37,27,\#5,\#4]; N22:[8,32,\#5,\#5]; N23:[29,39,\#7,\#5]; N24:[40,41,\#6,\#6]; N25:[24,15,\#8,\#5]; N26:[6,45,\#6,\#6]; N27:[21,34,\#6,\#4]; N28:[42,9,\#5,\#2]; N29:[23,12,\#6,\#5]; N30:[21,22,\#6,\#6]; N31:[18,4,\#7,\#6]; N32:[29,22,\#7,\#6]; N33:[18,4,\#7,\#6]; N34:[21,0,\#6,\#5]; N35:[8,44,\#5,\#5]; N36:[18,10,\#7,\#5]; N37:[29,21,\#7,\#6]; N38:[24,21,\#8,\#6]; N39:[23,42,\#6,\#5]; N40:[24,18,\#8,\#7]; N41:[24,0,\#8,\#5]; N42:[4,13,\#6,\#5]; N43:[24,41,\#8,\#6]; N44:[41,0,\#6,\#5]; N45:[4,6,\#6,\#6]; N46:[44,27,\#5,\#4]; N47:[22,32,\#6,\#5]

\vspace{1mm}

\textbf{Output:} min\_colors: 4, coloring: [3, 0, 1, 0, 0, 0, 3, 1, 0, 3, 0, 0, 2, 1, 3, 1, 2, 0, 0, 0, 0, 3, 3, 2, 3, 0, 0, 1, 0, 3, 2, 3, 2, 3, 2, 3, 3, 2, 2, 3, 1, 2, 2, 0, 1, 2, 0, 1]

\end{tcolorbox}

\section{Model-then-solve details}
\label{appendix:model-then-solve}
We provide the prompt used to instruct LLM to formulate the minor-embedding problem:
\clearpage
\begin{tcolorbox}[
enhanced, colback=gray!5!white,colframe=black!75!white,
    title=Model-then-solve Prompt,
]
You are an expert in Operations Research, Mixed-Integer Linear Programming (MILP), and graph theory. Your task is to mathematically formulate and solve a Graph Minor-Embedding problem using Python and the \texttt{gurobipy} library.

\medskip
\textbf{Problem Description}

We are given two undirected graphs:
\begin{enumerate}
    \item A Problem Graph, $P$, with \texttt{n} nodes labeled \texttt{0} to \texttt{n-1}, and a set of edges $E_P$.
    \item A Hardware Graph, $G$, with \texttt{m} nodes labeled \texttt{0} to \texttt{m-1}, and a set of edges $E_G$.
\end{enumerate}

We want to find a valid minor-embedding of $P$ into $G$. A valid embedding maps each problem node $i$ in $P$ to a subset of hardware nodes $S_i$ in $G$ (called a ``chain''), subject to the following rules:
\begin{enumerate}
    \item \textbf{Disjoint Chains:} For any two distinct problem nodes $i$ and $j$, their corresponding chains must not overlap (i.e., $S_i \cap S_j = \emptyset$).
    \item \textbf{Chain Connectivity:} For every problem node $i$, the subgraph induced by $S_i$ in $G$ must be connected.
    \item \textbf{Chain Size Limit:} The maximum number of hardware nodes in any chain $S_i$ is strictly limited to 3 (i.e., $|S_i| \leq 3$).
    \item \textbf{Edge Realization:} For every problem edge $(i,j)$ in $E_P$, there must exist at least one hardware edge $(u,v)$ in $E_G$ such that $u \in S_i$ and $v \in S_j$.
\end{enumerate}

\textbf{Objective}

Among all feasible valid embeddings, minimize the total number of hardware nodes utilized across all chains (i.e., minimize the sum of $|S_i|$ for all $i$).

\medskip
\textbf{Task 1: Mathematical Formulation}

Write a complete, rigorous Mixed-Integer Linear Programming (MILP) formulation for this problem. You must strictly provide:
\begin{itemize}
    \item \textbf{Sets and Indices:} Clearly define all sets (e.g., nodes, edges).
    \item \textbf{Parameters:} Define any given constants.
    \item \textbf{Decision Variables:} Clearly define all binary/integer/continuous variables.
    \item \textbf{Objective Function:} The mathematical expression to minimize.
    \item \textbf{Constraints:} All rules stated above translated into linear inequalities. Provide brief, logical explanations for each constraint block.
\end{itemize}

\textbf{Task 2: Python + Gurobi Implementation}

Write a self-contained Python script using the \texttt{gurobipy} library to solve this model.
\begin{itemize}
    \item Encapsulate the logic in a function with the following signature:
    \begin{center}
    \texttt{def solve\_minor\_embedding(n: int, P\_edges: list, m: int, G\_edges: list) -> dict:}
    \end{center}
    \item The function should return a dictionary mapping each problem node to its list of hardware nodes (e.g., \texttt{\{0: [1, 2], 1: [3], ...\}}). If the problem is infeasible, return \texttt{None}.
    \item Ensure the code strictly follows your mathematical model.
    \item Comment the code to show which block of constraints corresponds to which mathematical equation.
    \item Include a small, reproducible \texttt{if \_\_name\_\_ == "\_\_main\_\_":} block with dummy data to test the function.
\end{itemize}
\end{tcolorbox}

The full response from ChatGPT is available at the following link: \href{https://drive.google.com/drive/folders/1P7ixNFALdlZHyGm3vc8eGNRy8681W2jY?usp=drive_link}{LLMOptimizer}.

\end{document}